\DeclareMathOperator*{\argmin}{arg\,min}
\DeclareMathOperator*{\argmax}{arg\,max}
\renewcommand{\vec}[1]{\boldsymbol{#1}}
\theoremstyle{plain}% default
\newtheorem{thm}{Theorem}%[section]
\newtheorem{prop}[thm]{Proposition}
\theoremstyle{definition}
\newtheorem{defn}{Definition}%[section]
\theoremstyle{remark}
\newtheorem*{rem}{Remark}
\pgfplotsset{width=7cm,compat=1.5}
\title{Evaluating and Aggregating Feature-based Model Explanations}
\author{
Umang Bhatt$^{1,2}$\footnote{Contact Author}\and
Adrian Weller$^{1,3}$\And
Jos\'e M. F. Moura$^2$\\
\affiliations
$^1$University of Cambridge\\
$^2$Carnegie Mellon University\\
$^3$The Alan Turing Institute\\
\emails
\{usb20, aw665\}@cam.ac.uk, moura@ece.cmu.edu
}
\begin{document}

\maketitle

\begin{abstract}
A feature-based model explanation denotes how much each input feature contributes to a model's output for a given data point. As the number of proposed explanation functions grows, we lack quantitative evaluation criteria to help practitioners know when to use which explanation function. This paper proposes quantitative evaluation criteria for feature-based explanations: low sensitivity, high faithfulness, and low complexity. We devise a framework for aggregating explanation functions. We develop a procedure for learning an aggregate explanation function with lower complexity and then derive a new aggregate Shapley value explanation function that minimizes sensitivity.
\end{abstract}

\section{Introduction}
There has been great interest in understanding black-box machine learning models via post-hoc explanations. Much of this work has focused on feature-level importance scores for how much a given input feature contributes to a model's output. These techniques are popular amongst machine learning scientists who want to sanity check a model before deploying it in the real world \cite{Bhatt2019ExplainableML}. 
Many feature-based explanation functions are gradient-based techniques that analyze the gradient flow through a model to determine salient input features \cite{shrikumar2017learning,sundararajan2017axiomatic}. Other explanation functions perturb input values to a reference output and measure the change in the model's output \cite{vstrumbelj2014explaining,shap}.

With many candidate explanation functions, machine learning practitioners find it difficult to pick which explanation function best captures how a model reaches a specific output for a given input. Though there has been work in qualitatively evaluating feature-based explanation functions on human subjects \cite{lage2019human}, there has been little exploration into formalizing quantitative techniques for evaluating model explanations. Recent work has created auxiliary tasks to test if attribution is assigned to relevant inputs \cite{yang2019bim} and has developed tools to verify if the features important to an explanation function are relevant to the model itself \cite{camburu2019i}.

Borrowing from the humanities, we motivate three criteria for assessing a feature-based explanation: sensitivity, faithfulness, and complexity. 
Philosophy of science research has advocated for explanations that vary proportionally with changes in the system being explained \cite{lipton2003inference}; as such, explanation functions should be insensitive to perturbations in the model inputs, especially if the model output does not change.
Capturing relevancy faithfully is helpful in an explanation \cite{ruben2015explaining}. 
Since humans cannot process a lot of information at once, some have argued for minimal model explanations that contain only relevant and representative features \cite{batterman2014minimal}; therefore, an explanations should not be complex (i.e., use few features).

In this paper, we first define these three distinct criteria: low sensitivity, high faithfulness, and low complexity. With many explanation function choices, we then propose methods for learning an aggregate explanation function that combines explanation functions. If we want to find the simplest explanation from a set of explanations, then we can aggregate explanations to minimize the complexity of the resulting explanation. If we want to learn a smoother explanation function that varies slowly as inputs are perturbed, we can leverage an aggregation scheme that learns a less sensitive explanation function. To the best of our knowledge, we are the first to rigorously explore aggregation of various explanations, while placing explanation evaluation on an objective footing. To that end, we highlight the contributions of this paper:
\begin{itemize}
    \item We describe three desirable criteria for feature-based explanation functions: low sensitivity, high faithfulness, and low complexity.
    \item We develop an aggregation framework for combining explanation functions.
    \item We create two techniques that reduce explanation complexity by aggregating explanation functions.
    \item We derive an approximation for Shapley-value explanations by aggregating explanations from a point's nearest neighbors, minimizing explanation sensitivity and resembling how humans reason in medical settings.
\end{itemize}

\section{Preliminaries}
Restricting to supervised classification settings, let $\vec{f}$ be a black box predictor that maps an input $\vec{x} \in \mathbb{R}^d$ to an output $\vec{f}(\vec{x}) \in \mathcal{Y}$. 
An explanation function $\vec{g}$ from a family of explanation functions, $\mathcal{G}$, takes in a predictor $\vec{f}$ and a point of interest $\vec{x}$ and returns importance scores $\vec{g}(\vec{f}, \vec{x}) = \vec{\phi_{\vec{x}}} \in \mathbb{R}^d$ for all features, where $\vec{g}(\vec{f}, \vec{x})_i = \phi_{\vec{x},i}$ (simplified to $\phi_{i}$ in context) is the importance of (or attribution for) feature $x_i$ of $\vec{x}$. By $\vec{g}_j$, we refer to a particular explanation function, usually from a set of explanation functions $\mathcal{G}_m = \{\vec{g}_1,\vec{g}_2, \hdots, \vec{g}_m\}$. 

We denote $D: \mathbb{R}^d \times \mathbb{R}^d  \mapsto \mathbb{R}_{\ge 0}$ to be a distance metric over explanations, while $\rho: \mathbb{R}^d \times \mathbb{R}^d  \mapsto \mathbb{R}_{\ge 0}$ denotes a distance metric over the inputs. 
An evaluation criterion $\mu$  takes in a predictor $\vec{f}$, explanation function $\vec{g}$, and input $\vec{x}$, and outputs a scalar: $\mu(\vec{f}, \vec{g}; \vec{x})$.
$\mathcal{D} = \{(\vec{x}^i, y^i)\}_{i=1}^{n}$ refers to a dataset of input-output pairs, and $\mathcal{D}_x$ denotes all $\vec{x}^i$ in $\mathcal{D}$.

\section{Evaluating Explanations}
\label{sec:eval}
With the number of techniques to develop feature level explanations growing in the explainability literature, picking which explanation function $\vec{g}$ to use can be difficult. 
In order to study the aggregation of explanation functions, we define three desiderata of an explanation function $\vec{g}$.
\label{agg}
\subsection{Desideratum: Low Sensitivity}
We want to ensure that, if inputs are near each other and their model outputs are similar, then their explanations should be close to each other.
Assuming $\vec{f}$ is differentiable, we desire an explanation function $\vec{g}$ to have low sensitivity in the region around a point of interest $\vec{x}$, implying local smoothness of $\vec{g}$. While \cite{melis2018towards} codified the property, \cite{ghorbani2017interpretation} empirically tested explanation function sensitivity. We follow the convention of the former and define \textit{max sensitivity} and \textit{average sensitivity} in the neighborhood of a point of interest $\vec{x}$. 

Let $\mathcal{N}_{r} = \{\vec{z} \in \mathcal{D}_{\vec{x}} \;|\; \rho(\vec{x}, \vec{z}) \leq r,  \vec{f}(\vec{x}) = \vec{f}(\vec{z})\}$ be a neighborhood of datapoints within a radius $r$ of $\vec{x}$. 

\begin{defn}[Max Sensitivity]
Given a predictor $\vec{f}$, an explanation function $\vec{g}$, distance metrics $D$ and $\rho$, a radius $r$, and a point $\vec{x}$,  we define the max sensitivity of $\vec{g}$ at $\vec{x}$ as:
\[
\mu_{\text{M}}(\vec{f},\vec{g}, r; \vec{x}) = \max_{\vec{z} \in \mathcal{N}_{r}} D(\vec{g}(\vec{f}, \vec{x}),\vec{g}(\vec{f},\vec{z}))
\]
\end{defn}

\begin{defn}[Average Sensitivity]
Given a predictor $\vec{f}$, an explanation function $\vec{g}$, distance metrics $D$ and $\rho$, a radius $r$, a distribution $\mathbb{P}_{\vec{x}}(\cdot)$ over the inputs centered at point $\vec{x}$, we define the average sensitivity of $\vec{g}$ at $\vec{x}$ as:
\[
\mu_{\text{A}}(\vec{f},\vec{g}, r; \vec{x}) =\underset{\vec{z} \in \mathcal{N}_{r}}{\int} D(\vec{g}(\vec{f},\vec{x}),\vec{g}(\vec{f},\vec{z}))\mathbb{P}_{\vec{x}}(\vec{z})d\vec{z}
\]
\end{defn}

\subsection{Desideratum: High Faithfulness}
Faithfulness has been defined in \cite{yeh2019sensitive}.
The feature importance scores from $\vec{g}$ should correspond to the important features of $\vec{x}$ for $\vec{f}$; as such, when we set particular features $\vec{x}_s$ to a baseline value $\bar{\vec{x}}_s$, the change in predictor's output should be proportional to the sum of attribution scores of features in $\vec{x}_s$. We measure this as the correlation between the sum of the attributions of $\vec{x}_s$ and the difference in output when setting those features to a reference baseline. For a subset of indices $S \subseteq \{1,2, \ldots d\}$, $\vec{x}_{s} = \{x_i, i \in S\}$ denotes a sub-vector of input features that partitions the input, $\vec{x} = \vec{x}_{s} \cup \vec{x}_{c}$. $\vec{x}_{[\vec{x}_{s} = \bar{\vec{x}}_{s}]}$ denotes an input where $\vec{x}_{s}$ is set to a reference baseline while $\vec{x}_{c}$ remains unchanged: $\vec{x}_{[\vec{x}_{s} = \bar{\vec{x}}_{s}]} = \bar{\vec{x}}_{s} \cup \vec{x}_{c}$. When $|S| = d$, $\vec{x}_{[\vec{x}_{s} = \bar{\vec{x}}_{s}]} = \bar{\vec{x}}$.

\begin{rem}[Reference Baselines]
Recent work has discussed how to pick a proper reference baseline $\bar{\vec{x}}$. \cite{sundararajan2017axiomatic} suggests using a baseline where $\vec{f}(\bar{\vec{x}}) \approx 0$, while others have proposed taking the baseline to be the mean of the training data. \cite{chang2018explaining} notes that the baseline can be learned using generative modeling. 
\end{rem}

\begin{defn}[Faithfulness]
Given a predictor $\vec{f}$, an explanation function $\vec{g}$, a point $\vec{x}$, and a subset size $|S|$, we define the faithfulness of $\vec{g}$ to $\vec{f}$ at $\vec{x}$ as: 
\[
\mu_{\text{F}}(\vec{f},\vec{g}; \vec{x}) = \underset{S \in \binom{[d]}{|S|}}{\text{corr}}\left(\sum_{i \in S}\vec{g}(\vec{f},\vec{x})_i, \vec{f}(\vec{x}) - \vec{f}\left(\vec{x}_{[\vec{x}_{s} = \bar{\vec{x}}_{s}]}\right)\right)
\]
\end{defn}
For our experiments, we fix $|S|$ then randomly sample subsets $\vec{x}_{s}$ of the fixed size from $\vec{x}$ to estimate correlation.
Since we do not see all $\binom{[d]}{|S|}$ subsets in our calculation of faithfulness, we may not get an accurate estimate of the criterion.
Though hard to codify and even harder to aggregate, faithfulness is desirable, as it demonstrates that an explanation captures which features the predictor uses to generate an output for a given input. Learning global feature importances that highlight, in expectation, which features a predictor relies on is a challenging problem left to future work.
\subsection{Desideratum: Low Complexity}
A complex explanation is one that uses all $d$ features in its explanation of which features of $\vec{x}$ are important to $\vec{f}$. Though this explanation may be faithful to the model (as defined above), it may be too difficult for the user to understand (especially if $d$ is large). We define a fractional contribution distribution, where $|\cdot|$ denotes absolute value:
\[
\mathbb{P}_{\vec{g}}(i) = \frac{|\vec{g}(\vec{f},\vec{x})_i|}{\underset{j \in [d]}{\sum} |\vec{g}(\vec{f},\vec{x})_j|}; \; \mathbb{P}_{\vec{g}} = \{\mathbb{P}_{\vec{g}}(1), \ldots, \mathbb{P}_{\vec{g}}(d)\}
\]
Note that $\mathbb{P}_{\vec{g}}$ is a valid probability distribution. Let $\mathbb{P}_{\vec{g}}(i)$ denote the fractional contribution of feature $\vec{x}_{i}$ to the total magnitude of the attribution.
If every feature had equal attribution, the explanation would be complex (even if it is faithful). The simplest explanation would be concentrated on one feature. We define complexity as the entropy of $\mathbb{P}_{\vec{g}}$.

\begin{defn}[Complexity]
Given a predictor $\vec{f}$, explanation function $\vec{g}$, and a point $\vec{x}$, the complexity of $\vec{g}$ at $\vec{x}$ is:
\begin{align*}
\mu_{\text{C}}(\vec{f},\vec{g}; \vec{x}) = \mathbb{E}_i \big [- \ln(\mathbb{P}_{\vec{g}}) \big] & =  - \sum_{i=1}^{d} \mathbb{P}_{\vec{g}}(i) \; \ln(\mathbb{P}_{\vec{g}}(i))
\end{align*}

\end{defn}

\section{Aggregating Explanations}
Given a trained predictor $\vec{f}$, a set of explanation functions $\mathcal{G}_m = \{\vec{g}_1,\ldots, \vec{g}_m\}$, a criterion to optimize $\mu$, and a set of inputs $\mathcal{D}_{\vec{x}}$, we want to find an aggregate explanation function $\vec{g}_{\text{agg}}$ that satisfies $\mu$ at least as well as any $\vec{g}_i \in \mathcal{G}_m$. Let $h(\cdot)$ represent some function that combines $m$ explanations into a consensus $\vec{g}_{\text{agg}} = h(\mathcal{G}_m)$. We now explore different candidates for $h(\cdot)$. 

\subsection{Convex Combination}
\label{lowConv}
Suppose we have two different explanation functions $\vec{g}_{1}$ and $\vec{g}_{2}$ and have chosen a criterion $\mu$ to evaluate a $\vec{g}$. Consider an aggregate explanation, $\vec{g}_{\text{agg}} = h(\vec{g}_{1}, \vec{g}_{2})$. A potential $h(\cdot)$ is a convex combination where $\vec{g}_{\text{agg}} = h(\vec{g}_{1}, \vec{g}_{2}) = w\vec{g}_{1} + (1-w)\vec{g}_{2} = \vec{w}^\intercal\mathcal{G}_m$.
\begin{prop}
\label{conv}
If $D$ is the $\ell_2$ distance and $\mu = \mu_{\text{A}}$ (average sensitivity), the following holds: 
\[\mu_{\text{A}}(\vec{g}_{\text{agg}}) \leq w \mu_{\text{A}}(\vec{g}_{1}) + (1-w) \mu_{\text{A}}(\vec{g}_{2})
\]
\end{prop}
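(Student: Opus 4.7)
The plan is essentially to apply the triangle inequality pointwise and then integrate. First I would fix an arbitrary $\vec{z} \in \mathcal{N}_r$ and expand
\[
\vec{g}_{\text{agg}}(\vec{f},\vec{x}) - \vec{g}_{\text{agg}}(\vec{f},\vec{z}) = w\bigl(\vec{g}_1(\vec{f},\vec{x}) - \vec{g}_1(\vec{f},\vec{z})\bigr) + (1-w)\bigl(\vec{g}_2(\vec{f},\vec{x}) - \vec{g}_2(\vec{f},\vec{z})\bigr)
\]
using the definition $\vec{g}_{\text{agg}} = w\vec{g}_1 + (1-w)\vec{g}_2$ of the convex combination.

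Next, I would invoke the triangle inequality and positive homogeneity of the $\ell_2$ norm. Since $w \in [0,1]$ both coefficients are nonnegative, so the absolute values coming from positive homogeneity drop, yielding the pointwise bound
\[
D(\vec{g}_{\text{agg}}(\vec{f},\vec{x}),\vec{g}_{\text{agg}}(\vec{f},\vec{z})) \le w\, D(\vec{g}_1(\vec{f},\vec{x}),\vec{g}_1(\vec{f},\vec{z})) + (1-w)\, D(\vec{g}_2(\vec{f},\vec{x}),\vec{g}_2(\vec{f},\vec{z})).
\]
Finally, I would integrate both sides against $\mathbb{P}_{\vec{x}}(\vec{z})$ over $\mathcal{N}_r$. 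By linearity of the integral the constants $w$ and $1-w$ factor out, and the definition of $\mu_{\text{A}}$ applied to each term gives exactly the claimed inequality $\mu_{\text{A}}(\vec{g}_{\text{agg}}) \le w\mu_{\text{A}}(\vec{g}_1) + (1-w)\mu_{\text{A}}(\vec{g}_2)$.

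There is no real obstacle: the argument is just the triangle inequality plus linearity of expectation. The only subtle point worth flagging is the implicit hypothesis $w \in [0,1]$, which is what allows $\|w u\|_2 = w\|u\|_2$ and $\|(1-w)v\|_2 = (1-w)\|v\|_2$ without introducing absolute values; the statement would fail for general real $w$. This argument actually extends verbatim to any norm-induced $D$ (via the same triangle inequality) and to finite convex combinations of more than two explanation functions.
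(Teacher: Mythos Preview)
Your proof is correct and follows essentially the same route as the paper: expand $\vec{g}_{\text{agg}}$ as the convex combination, apply the triangle inequality together with positive homogeneity of $\|\cdot\|_2$, and then integrate. One small difference is that the paper's argument invokes a uniform $\mathbb{P}_{\vec{x}}$ before dropping the density from the integrand, whereas you (correctly) keep $\mathbb{P}_{\vec{x}}(\vec{z})$ throughout and appeal to linearity of integration, so your version is slightly more general; your remarks on the necessity of $w\in[0,1]$ and on the extension to arbitrary norm-induced $D$ are also accurate.
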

\begin{proof}
Assuming $\mathbb{P}_{\vec{x}}(\vec{z})$ is uniform, we can apply the triangle inequality and the convexity of $D$ to arrive at the above.
\end{proof}

A convex combination of explanation functions thus yields an aggregate explanation function that is at most as sensitive as any of the explanation functions taken alone. In order to learn $w$ given $\vec{g}_{1}$ and $\vec{g}_{2}$, we set up an objective as follows.
\begin{equation}
w^{*} = \underset{w}{\argmin} \underset{\vec{x} \sim \mathcal{D}_{\vec{x}}}{\mathbb{E}} \big [  \mu_{\text{A}}(\vec{g}_{agg}(\vec{f},\vec{x}))\big]
\label{wOPT}
\end{equation}
Assuming a uniform distribution around all $\vec{x} \in \mathcal{D}_{\vec{x}}$, we can rewrite this as:
\begin{equation*}
    w^{*} = \underset{w}{\argmin} \underset{\vec{x} \sim \mathcal{D}_{\vec{x}}}{\int} \; \underset{\vec{z} \in \mathcal{N}_r}{\int}D(\vec{g}_{agg}(\vec{x}), \vec{g}_{agg}(\vec{z}))\mathbb{P}_{\vec{x}}(\vec{z})d\vec{z}d\vec{x}
\end{equation*}
By Cauchy-Schwartz, we get the following:
\[
w^{*} \leq \underset{w}{\argmin} \underset{\vec{x} \sim \mathcal{D}_{\vec{x}}}{\int}\;  \underset{\vec{z} \in \mathcal{N}_r}{\int}D \left(a,b\right)d\vec{z}d\vec{x}
\]

where $a = w\vec{g}_{1}(\vec{f},\vec{x}) + (1-w)\vec{g}_{2}(\vec{f},\vec{x})$ and $b = w\vec{g}_{1}(\vec{f},\vec{z}) + (1-w)\vec{g}_{2}(\vec{f},\vec{z})$. %Explained in Figure~\ref{fig:proof},
This implies that $w^*$ will be minimal when one element of $w^*$ is 0 and the other is 1. Therefore, a convex combination of two explanation functions, found by solving Equation~\eqref{wOPT}, will be at most as sensitive as the least sensitive explanation function.

\subsection{Centroid Aggregation}
\label{meanEXP}
Another sensible candidate for $h(\cdot)$ to combine $m$ explanation functions is based on centroids with respect to some distance function $D: \mathcal{G} \times \mathcal{G} \mapsto \mathbb{R}$, so that:
\[
\vec{g}_{\text{agg}} \in \argmin_{\vec{g} \in \mathcal{G}} \underset{\vec{g}_{i} \in \mathcal{G}_m}{\mathbb{E}} \big [ D(\vec{g},\vec{g}_i)^p \big ] = \argmin_{\vec{g} \in \mathcal{G}} \sum_{i=1}^{m} \, D(\vec{g},\vec{g}_i)^p
\]
where $p$ is a positive constant. The simplest examples of distances are the $\ell_2$ and $\ell_1$ distances with real-valued attributions where $\mathcal{G} \subseteq \mathbb{R}^d$. 
\begin{prop}
\label{MeanProof}
When $D$ is the $\ell_2$ distance and $p = 2$, the aggregate explanation is the feature-wise sample mean.
\begin{equation}
\vec{g}_{\text{agg}}(\vec{f},\vec{x}) =  \vec{g}_{\text{avg}}(\vec{f},\vec{x}) = \frac{1}{m}\sum_{i=1}^m\vec{g}_{i}(\vec{f},\vec{x})
\label{eqn:avg}
\end{equation}
\end{prop}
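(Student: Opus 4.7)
The plan is to recognize Proposition~\ref{MeanProof} as the standard fact that the sample mean minimizes the sum of squared Euclidean distances, and to derive it either by coordinate-wise differentiation or by the bias--variance style decomposition.

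First I would write the objective explicitly as
\[
F(\vec{g}) \;=\; \sum_{i=1}^{m} D(\vec{g},\vec{g}_i)^2 \;=\; \sum_{i=1}^{m} \lVert \vec{g} - \vec{g}_i(\vec{f},\vec{x}) \rVert_2^2 \;=\; \sum_{i=1}^{m} \sum_{j=1}^{d} \bigl( g_j - g_{i,j}(\vec{f},\vec{x}) \bigr)^2,
\]
so that the problem decouples over the $d$ coordinates. For each fixed feature index $j$, the inner sum is a strictly convex quadratic in the scalar $g_j$, so its unique minimizer can be found by setting the derivative to zero. Doing this yields $g_j^* = \frac{1}{m}\sum_{i=1}^{m} g_{i,j}(\vec{f},\vec{x})$, which stacked over all $j$ gives exactly the feature-wise mean in Equation~\eqref{eqn:avg}.

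Alternatively, and perhaps more cleanly, I would use the standard identity
\[
\sum_{i=1}^{m} \lVert \vec{g}_i - \vec{g} \rVert_2^2 \;=\; \sum_{i=1}^{m} \lVert \vec{g}_i - \vec{g}_{\text{avg}} \rVert_2^2 \;+\; m\,\lVert \vec{g}_{\text{avg}} - \vec{g} \rVert_2^2,
\]
obtained by adding and subtracting $\vec{g}_{\text{avg}} = \frac{1}{m}\sum_i \vec{g}_i(\vec{f},\vec{x})$ inside the norm and noting that the cross term vanishes by the definition of the mean. The first summand is independent of $\vec{g}$ and the second is minimized (at zero) precisely when $\vec{g} = \vec{g}_{\text{avg}}$, which establishes the claim and also shows the minimizer is unique. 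Since the argument is purely algebraic and relies only on an elementary identity for Euclidean norms, there is no real obstacle; the only thing to be careful about is writing the expansion cleanly so that the cross term is visibly zero.
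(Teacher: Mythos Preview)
Your first approach---writing the objective as a sum of squared $\ell_2$ norms, decoupling over coordinates, and setting the partial derivative in each coordinate to zero---is exactly the argument the paper gives in its appendix proof of Proposition~\ref{MeanProof}. Your alternative via the identity $\sum_i \lVert \vec{g}_i - \vec{g} \rVert_2^2 = \sum_i \lVert \vec{g}_i - \vec{g}_{\text{avg}} \rVert_2^2 + m\lVert \vec{g}_{\text{avg}} - \vec{g} \rVert_2^2$ is a genuinely different route: it avoids calculus entirely, makes uniqueness of the minimizer immediate, and works over any inner-product space without appealing to coordinates, whereas the paper's derivative argument (and your first version) is more elementary but requires either a convexity remark or a second-derivative check to confirm the critical point is a minimum.
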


\begin{prop}
\label{MedProof}
When $D$ is the $\ell_1$ distance and $p = 1$, the aggregate explanation is the feature-wise sample median. \[
\vec{g}_{\text{agg}}(\vec{f},\vec{x}) = \textrm{med}\{\mathcal{G}_m\}
\]
\end{prop}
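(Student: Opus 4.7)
The plan is to exploit the separability of the $\ell_1$ norm across coordinates, which reduces the $d$-dimensional optimization to $d$ independent scalar problems, each of which is the classical problem of minimizing a sum of absolute deviations.

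First I would expand the objective and swap the order of summation,
\[
\sum_{i=1}^{m} D(\vec{g},\vec{g}_i) \;=\; \sum_{i=1}^{m} \|\vec{g}-\vec{g}_i\|_1 \;=\; \sum_{j=1}^{d} \sum_{i=1}^{m} |g_j - g_{i,j}|,
\]
where $g_j$ is the $j$-th coordinate of $\vec{g}$ and $g_{i,j}$ is the $j$-th coordinate of $\vec{g}_i$. Since the inner sums depend on disjoint coordinates of $\vec{g}$, the objective decouples and one may minimize independently over each $g_j$.

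Next I would handle the scalar subproblem $\min_{t \in \mathbb{R}} \sum_{i=1}^{m} |t - a_i|$ with $a_i = g_{i,j}$. This is the textbook least-absolute-deviations problem, whose minimizer is the sample median of $\{a_1,\dots,a_m\}$. One clean justification is via subgradients: the subdifferential of the convex objective at $t$ is $\sum_i \partial |t - a_i|$, where $\partial|t-a_i|$ equals $\mathrm{sign}(t-a_i)$ for $t\neq a_i$ and the interval $[-1,1]$ at ties. Zero lies in this subdifferential exactly when the number of $a_i$ strictly below $t$ equals the number strictly above (up to absorption by ties at $t$), i.e.\ when $t$ is a median. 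Alternatively, an elementary exchange argument on the sorted order statistics shows that moving $t$ past an $a_i$ changes the objective monotonically, so any value between the two middle order statistics is optimal.

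Assembling the coordinate-wise minimizers gives $\vec{g}_{\text{agg}} = \mathrm{med}\{\mathcal{G}_m\}$, as claimed. The main conceptual step is noticing the coordinate-wise separability that the $\ell_1$ distance (unlike the $\ell_2$ distance used in Proposition~\ref{MeanProof}) affords; after that, the result is just the standard characterization of the univariate $\ell_1$ Fréchet median, and there is no substantive obstacle. The only minor caveat worth flagging is nonuniqueness for even $m$, handled by interpreting $\mathrm{med}\{\cdot\}$ as any element of the median interval, and the implicit assumption that $\mathcal{G}$ contains the coordinate-wise median (which holds whenever $\mathcal{G}\subseteq\mathbb{R}^d$ is taken to be the full attribution space).
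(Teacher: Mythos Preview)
Your proposal is correct and follows essentially the same approach as the paper: decouple the $\ell_1$ objective coordinate-wise, then invoke the first-order optimality condition for the scalar sum-of-absolute-deviations problem to identify the median. Your use of subgradients is in fact slightly more careful than the paper's informal partial-derivative argument (since $|t-a_i|$ is not differentiable at $t=a_i$), and your remarks on nonuniqueness for even $m$ and on the domain $\mathcal{G}$ are welcome refinements the paper omits.
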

Propositions \ref{MeanProof} and \ref{MedProof} follow from standard results in statistics that the mean minimizes the sum of squared differences and the median minimizes the sum of absolute deviations \cite{berger2013statistical}.

We could obtain rank-valued attributions by taking any quantitative vector-valued attributions and ranking features according to their values. If $D$ is the Kendall-tau distance with rank-valued attributions where $\mathcal{G} \subseteq \mathcal{S}_d$ (the set of permutations over $d$ features), then the resulting aggregation mechanism via computing the centroid is called the Kemeny-Young rule.  For rank-valued attributions, any aggregation mechanism falls under the rank aggregation problem in social choice theory for which many practical ``voting rules'' exist \cite{Bhatt2019BuildingHT}. 

We analyze the error of a candidate $\vec{g}_\text{agg}$. 
Suppose the optimal explanation for $\vec{x}$ using $\vec{f}$ is $\vec{g}^{*}(\vec{f},\vec{x})$ and suppose $\vec{g}_\text{agg}$ is the mean explanation for $\vec{x}$ in Equation~\eqref{eqn:avg}. Let $\epsilon_{i,\vec{x}} = ||\vec{g}^{*}(\vec{f},\vec{x}) - \vec{g}_{i}(\vec{f},\vec{x})||$ be the error between the optimal explanation and the $i^\text{th}$ explanation function. 

\begin{prop}
\label{error}
The error between the aggregate explanation $\vec{g}_\text{agg}(\vec{f},\vec{x})$ and the optimal explanation $\vec{g}^\text{*}(\vec{f},\vec{x})$ satisfies: $$\epsilon_{\text{agg}} \leq \frac{\sum_{i=1}^{n}\sum_{j=1}^{m}\epsilon_{j,\vec{x}^{i}}}{mn}$$
\end{prop}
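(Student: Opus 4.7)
The plan is to interpret $\epsilon_{\text{agg}}$ as the dataset-averaged error $\epsilon_{\text{agg}} = \frac{1}{n}\sum_{i=1}^{n}\|\vec{g}^{*}(\vec{f},\vec{x}^{i}) - \vec{g}_{\text{agg}}(\vec{f},\vec{x}^{i})\|$, which matches the normalization by $mn$ on the right-hand side. With this reading, the bound reduces to a pointwise inequality averaged over the dataset, so the whole argument is essentially a triangle-inequality calculation together with linearity of the mean.

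First, I would fix an arbitrary $\vec{x}^{i} \in \mathcal{D}_{\vec{x}}$ and expand $\vec{g}_{\text{agg}}$ using Equation~\eqref{eqn:avg}, writing
\[
\vec{g}^{*}(\vec{f},\vec{x}^{i}) - \vec{g}_{\text{agg}}(\vec{f},\vec{x}^{i}) \;=\; \frac{1}{m}\sum_{j=1}^{m}\bigl(\vec{g}^{*}(\vec{f},\vec{x}^{i}) - \vec{g}_{j}(\vec{f},\vec{x}^{i})\bigr).
\]
Applying the triangle inequality to the norm of the right-hand side then gives $\|\vec{g}^{*}(\vec{f},\vec{x}^{i}) - \vec{g}_{\text{agg}}(\vec{f},\vec{x}^{i})\| \le \frac{1}{m}\sum_{j=1}^{m}\epsilon_{j,\vec{x}^{i}}$ directly from the definition of $\epsilon_{j,\vec{x}^{i}}$.

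Next, I would average this pointwise inequality over the $n$ data points in $\mathcal{D}_{\vec{x}}$. Summing over $i=1,\ldots,n$ and dividing by $n$ yields the target bound $\epsilon_{\text{agg}} \le \frac{1}{mn}\sum_{i=1}^{n}\sum_{j=1}^{m}\epsilon_{j,\vec{x}^{i}}$, completing the argument.

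The main obstacle is not technical — the inequality itself is a one-line consequence of the triangle inequality and the convexity of norms — but notational: the statement does not formally define $\epsilon_{\text{agg}}$, so the only real work is committing to the natural interpretation above and verifying that no additional structure on $D$ (beyond being induced by a norm) is required. If instead the intended $\epsilon_{\text{agg}}$ were pointwise at a single $\vec{x}$, the same triangle-inequality step gives the sharper pointwise bound $\frac{1}{m}\sum_{j}\epsilon_{j,\vec{x}}$, of which the stated dataset-averaged bound is a straightforward relaxation.
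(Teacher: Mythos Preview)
Your proposal is correct and matches the paper's proof essentially line for line: the paper also fixes a point, rewrites $\vec{g}^{*}-\vec{g}_{\text{agg}}$ as $\frac{1}{m}\sum_j(\vec{g}^{*}-\vec{g}_j)$, applies the triangle inequality to get the pointwise bound $\frac{1}{m}\sum_j \epsilon_{j,\vec{x}}$, and then averages over $\mathcal{D}_{\vec{x}}$. Your reading of $\epsilon_{\text{agg}}$ as the dataset-averaged error is exactly the interpretation the paper uses.
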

\begin{proof}
For a fixed $\vec{x}$, we have:
\begin{align*}
\epsilon_{\text{agg}, \vec{x}} & = ||\vec{g}^{*}(\vec{f},\vec{x}) - \vec{g}_{\text{agg}}(\vec{f},\vec{x})||\\
& = ||\frac{m\vec{g}^{*}(\vec{f},\vec{x})}{m} - \frac{1}{m}\sum_{i=1}^m\vec{g}_{i}(\vec{f},\vec{x})||\\
%& = ||\frac{1}{m}\sum_{i=1}^m (\vec{g}^{*}(\vec{f},\vec{x}) - \vec{g}_{i}(\vec{f},\vec{x}))||\\
%& = \frac{1}{m}||\sum_{i=1}^m (\vec{g}^{*}(\vec{f},\vec{x}) - \vec{g}_{i}(\vec{f},\vec{x}))||\\
& \leq \frac{1}{m}\sum_{i=1}^m ||\vec{g}^{*}(\vec{f},\vec{x}) - \vec{g}_{i}(\vec{f},\vec{x})|| = \frac{\sum_{i=1}^m \epsilon_{i, \vec{x}} }{m}
\end{align*}
Averaging across $\mathcal{D}_{\vec{x}}$, we obtain the result.
\end{proof}

Hence, by aggregating, we do better than when using one explanation function alone. Many gradient-based explanation functions fit to noise \cite{2018explainabilityeval}. One way to reduce noise would be to aggregate by ensembling or averaging. As proven in Proposition \ref{error}, the typical error of the aggregate is less than the expected error of each function alone. 

\section{Lowering Complexity Via Aggregation}
\label{sec:comp}
In this section, we describe iterative algorithms for aggregating explanation functions to obtain $\vec{g}_{\text{agg}}(\vec{f}, \vec{x})$ with lower complexity whilst combining $m$ candidate explanation functions $\mathcal{G}_m = \{\vec{g}_1,\ldots, \vec{g}_m\}$. 
We desire a $\vec{g}_{\text{agg}}(\vec{f},\vec{x})$ that contains information from all candidate explanations $\vec{g}_i(\vec{f},\vec{x})$ yet has entropy less than or equal to that of each explanation $\vec{g}_i(\vec{f},\vec{x})$. As discussed, a reasonable candidate for an aggregate explanation function is the sample mean given by Equation~\eqref{eqn:avg}.
We may want $\vec{g}_{\text{agg}}(\vec{f},\vec{x})$ to approach the sample mean, $\vec{g}_{\text{avg}}(\vec{f},\vec{x})$; however, the sample mean may have greater complexity than that of each $\vec{g}_{i}(\vec{f},\vec{x})$.

For example, let $\vec{g}_{1}(\vec{f},\vec{x}) = [-1,0]^T$ and $\vec{g}_{2}(\vec{f},\vec{x}) = [0, 1]^T$. The sample mean is $\vec{g}_{\text{avg}}(\vec{f},\vec{x}) = [-0.5,0.5]^T$. Both  $\vec{g}_1$ and $\vec{g}_2$ have the minimum possible complexity of $0$, while $\vec{g}_{\text{avg}}$ has the maximum possible complexity, $\log(2)$. 
Our aggregation technique must ensure that $\vec{g}_{\text{agg}}(\vec{f},\vec{x})$ approaches $\vec{g}_{\text{avg}}(\vec{f},\vec{x})$ while guaranteeing $\vec{g}_{\text{agg}}(\vec{f},\vec{x})$ has complexity less than or equal to that of each $\vec{g}_i(\vec{f},\vec{x})$.
We now present two approaches for learning a lower complexity explanation, visually represented in Figure \ref{fig:agg}.

\subsection{Gradient-Descent Style Method}
Our first approach is similar to gradient descent. Starting from each $\vec{g}_i(\vec{f},\vec{x})$, we iteratively move towards $\vec{g}_{\text{avg}}(\vec{f},\vec{x})$ in each of the $d$ directions (i.e., changing the $k$th feature by a small amount) if the complexity decreases with that move. We stop moving when the complexity no longer decreases or $\vec{g}_{\text{avg}}(\vec{f},\vec{x})$ is reached.
Simultaneously, we start from $\vec{g}_{\text{avg}}(\vec{f},\vec{x})$ and iteratively move towards each $\vec{g}_{i}(\vec{f},\vec{x})$ in each of the $d$ directions if the complexity decreases. We stop moving when the complexity no longer decreases or any of the $\vec{g}_{i}(\vec{f},\vec{x})$ are reached.
The final $\vec{g}_{\text{agg}}(\vec{f},\vec{x})$ is the location that has the smallest complexity from these $2d$ different walks. 
Since we only move if the complexity decreases and start from each $\vec{g}_i(\vec{f},\vec{x})$, the entropy of $\vec{g}_{\text{agg}}(\vec{f},\vec{x})$ is guaranteed to be less than or equal to the entropy of all $\vec{g}_i(\vec{f},\vec{x})$. 
\subsection{Region Shrinking Method}
In our second approach, we consider the closed region, $\textbf{R}$, which is the convex hull of all the explanation functions, $\vec{g}_i(\vec{f},\vec{x})$. Notice region $\textbf{R}$ initially contains $\vec{g}_{\text{avg}}$. We consider an iterative approach to find the global minimum in the region $\textbf{R}$.
As before, we consider the convex combination formed by two explanation functions, $\vec{g}_i$ and $\vec{g}_j$.
Using convex optimization, we find the value on the line segment between $\vec{g}_i$ and $\vec{g}_j$ that has the minimum complexity; essentially, we iteratively shrink the region. For the region shrinking method, the convex combination formed by $\vec{g}_i$ and $\vec{g}_j$ is:
\begin{equation*}
\label{convex}
 w (\vec{g}_i) + (1-w)(\vec{g}_j), w \in [0,1]
\end{equation*}For every pair of functions in $\mathcal{G}_m$, we find the functions that produces the minimum complexity in the convex combination of the functions, producing a new set of candidates $\mathcal{G}_m^\prime$. $\vec{g}_{\text{agg}}$ is the element in set $\mathcal{G}_m^\prime$ with minimal complexity after $K$ iterations.
In each iteration, a function is chosen if it has the minimum complexity of all the functions in a convex combination. Thus, the minimum complexity of the set $\mathcal{G}_m^\prime$ decreases or remains constant with each iteration.
\begin{figure}
\centering
\subfigure{%
\label{fig:ex3-a}%
\includegraphics[width=0.24\textwidth]{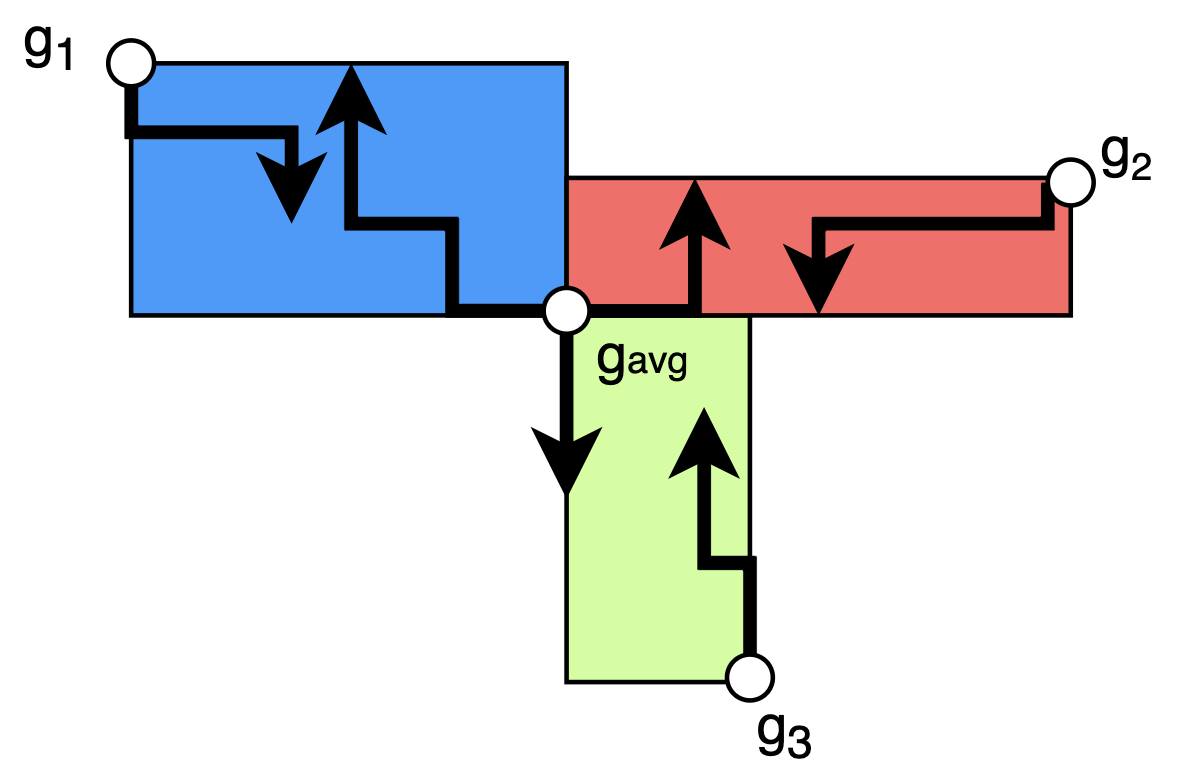}}%
\subfigure{%
\label{fig:ex3-b}%
\includegraphics[width=0.24\textwidth]{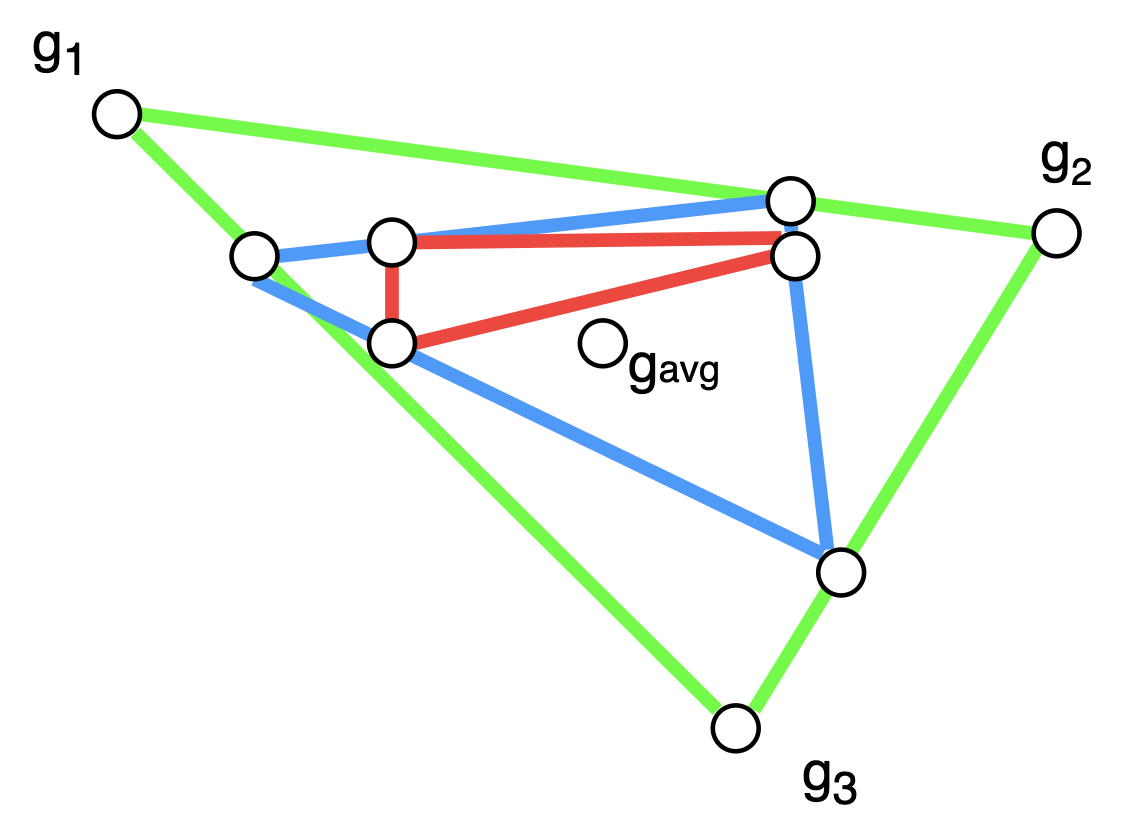}} \\
\caption{Visual examples of the two complexity lowering aggregation algorithms: gradient-descent style \subref{fig:ex3-a} and region shrinking \subref{fig:ex3-b} methods using explanation functions $\vec{g}_1,\vec{g}_2,\vec{g}_3$}
\label{fig:agg}
\end{figure}

\section{Lowering Sensitivity Via Aggregation}
\label{sec:sen}
To construct an aggregate explanation function $\vec{g}$ that minimizes sensitivity, we would need to ensure that a test point's explanation is a function of the explanations of its nearest neighbors under $\rho$. This is a natural analog for how humans reason: we use past similar events (training data) and facts about the present (individual features) to make decisions \cite{bhatt2019towards}. We now contribute a new explanation function $\vec{g}_{\text{AVA}}$ that combines the Shapley value explanations of a test point's nearest neighbors to explain the test point.

\subsection{Shapley Value Review}
Borrowing from game theory, Shapley values denote the marginal contributions of a player to the payoff of a coalitional game. Let $T$ be the number of players and let $v: 2^T \rightarrow \mathbb{R}$ be the characteristic function, where $v(S)$ denotes the worth (contribution) of the players in $S \subseteq T$. The Shapley value of player $i$'s contribution (averaging player $i$'s marginal contributions to all possible subsets $S$) is:
\[
\phi_i(v) = \frac{1}{|T|}\sum_{S \subseteq T \setminus \{i\}} \binom{T - 1}{S}^{-1}(v(S \cup \{i\}) - v(S))
\]
Let $\Phi \in \mathbb{R}^{T}$ be a Shapley value contribution vector for all players in the game, where $\phi_i(v)$ is the $i^{\text{th}}$ element of $\Phi$.

\subsection{Shapley Values as Explanations}
In the feature importance literature, we formulate a similar problem to where the game's payoff is the predictor's output $y = \vec{f}(\vec{x})$, the players are the $d$ features of $\vec{x}$, and the $\phi_i$ values represent the contribution of $x_i$ to the game $\vec{f}(\vec{x})$. Let the characteristic function be the importance score of a subset of features $\vec{x}_s$, where $\mathbb{E}_{Y} [\cdot | \vec{x}]$ is an expectation over $\mathbb{P}_{\vec{f}}(\cdot|\vec{x})$:
\[
v_{\vec{x}}(S) = \mathbb{E}_{Y}\bigg [ - \text{log}\frac{1}{\mathbb{P}_{\vec{f}}(Y|\vec{x}_{s})} \bigg | \vec{x} \bigg]
\]
This characteristic function denotes the negative of the expected number of bits required to encode the predictor's output based on the features in a subset $S$ \cite{chen2018shapley}.
Shapley value contributions can be approximated via Monte Carlo sampling \cite{vstrumbelj2014explaining} or via weighted least squares \cite{shap}.

\subsection{Aggregate Valuation of Antecedents}
We now explore how to explain a test point in terms of the Shapley value explanations of its neighbors. Termed Aggregate Valuation of Antecedents (AVA), we derive an explanation function that explains a data point in terms of the explanations of its neighbors. We do the following: suppose we want to find an explanation function $\vec{g}_{\text{AVA}}(\vec{f}, \vec{x}_{\text{test}})$ for a point of interest $\vec{x}_{\text{test}}$. First we find the $k$ nearest neighbors of $\vec{x}_{\text{test}}$ under $\rho$ denoted by $\mathcal{N}_k(\vec{x}_{\text{test}},\mathcal{D})$. 
\[\mathcal{N}_k(\vec{x}_{\text{test}},\mathcal{D}) = \argmin_{\mathcal{N} \subset \mathcal{D}, |\mathcal{N}| = k} \space \sum_{\vec{z} \in \mathcal{N}} \rho(\vec{x}_{\text{test}}, \vec{z})\]
We define $\vec{g}_{\text{AVA}}(\vec{f},\vec{x}_{\text{test}}) = \Phi_{\vec{x}_{\text{test}}}$ as the explanation function where:
\begin{align*}
\vec{g}_{\text{AVA}}(\vec{f},\vec{x}_{\text{test}})_{i} & = \phi_{i}(v_{\text{AVA}})  =  \sum_{\vec{z} \in \mathcal{N}_k(\vec{x}_{\text{test}})} \frac{\vec{g}_{\text{SHAP}}(\vec{f},\vec{z})_{i}}{\rho(\vec{x}_{\text{test}}, \vec{z})}\\ & = \sum_{\vec{z} \in \mathcal{N}_k(\vec{x}_{\text{test}})} \frac{\phi_i(v_{\vec{z}})}{\rho(\vec{x}_{\text{test}}, \vec{z})}    
\end{align*}

In essence, we weight each neighbor's Shapley value contribution by the inverse distance from the neighbor to the test point. AVA is closely related to bootstrap aggregation from classical statistics, as we take an average of model outputs to improve explanation function stability.

\begin{thm}\label{ava}
$\vec{g}_{\text{AVA}}(\vec{f},\vec{x}_{\text{test}})$ is a Shapley value explanation.
\end{thm}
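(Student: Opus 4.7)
The plan is to exhibit an explicit characteristic function $v_{\text{AVA}}$ such that applying the Shapley formula to it returns exactly the AVA attributions. The natural candidate is the distance-weighted sum of the neighbors' characteristic functions,
\[
v_{\text{AVA}}(S) \;=\; \sum_{\vec{z} \in \mathcal{N}_k(\vec{x}_{\text{test}})} \frac{v_{\vec{z}}(S)}{\rho(\vec{x}_{\text{test}}, \vec{z})},
\]
where $v_{\vec{z}}$ is the characteristic function from the \emph{Shapley Values as Explanations} subsection. The proof then reduces to invoking a single well-known property of Shapley values, namely linearity (additivity together with homogeneity): for any games $v_1,v_2$ on a common player set $T$ and any scalars $\alpha, \beta$, one has $\phi_i(\alpha v_1 + \beta v_2) = \alpha \phi_i(v_1) + \beta \phi_i(v_2)$ for every player $i$.

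First I would state and (very briefly) justify linearity straight from the closed form of $\phi_i$, since each $\phi_i(v)$ is a fixed $\mathbb{R}$-linear functional of the values $\{v(S)\}_{S \subseteq T}$. Next I would substitute the definition of $v_{\text{AVA}}$ into the Shapley formula and pull the distance weights $1/\rho(\vec{x}_{\text{test}}, \vec{z})$ outside, using linearity to obtain
\[
\phi_i(v_{\text{AVA}}) \;=\; \sum_{\vec{z} \in \mathcal{N}_k(\vec{x}_{\text{test}})} \frac{\phi_i(v_{\vec{z}})}{\rho(\vec{x}_{\text{test}}, \vec{z})}.
\]
By definition of AVA this right-hand side equals $\vec{g}_{\text{AVA}}(\vec{f},\vec{x}_{\text{test}})_i$, which proves the claim since $v_{\text{AVA}}$ is a bona fide characteristic function over the $d$ features (it is a nonnegative linear combination of characteristic functions, all defined on the same player set $\{1,\ldots,d\}$).

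The only subtle point, and what I expect to be the main obstacle, is making sure $v_{\text{AVA}}$ qualifies as an admissible characteristic function in the game-theoretic sense. Since the characteristic function chosen in the paper is the conditional-information quantity $v_{\vec{x}}(S) = \mathbb{E}_Y[-\log(1/\mathbb{P}_{\vec{f}}(Y|\vec{x}_s)) \mid \vec{x}]$, which need not satisfy $v(\emptyset)=0$, I would either appeal to the version of linearity that does not require normalization (the Shapley axioms and explicit formula go through on the whole vector space of set functions $2^{[d]} \to \mathbb{R}$), or add a constant to each $v_{\vec{z}}$ so that $v_{\vec{z}}(\emptyset) = 0$ and note that adding constants does not change any $\phi_i$. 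Either way, the weighted-sum game $v_{\text{AVA}}$ is well defined, and the theorem follows immediately from linearity without any further computation.
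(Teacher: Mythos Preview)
Your proposal is correct and takes essentially the same approach as the paper: both argue via the linearity (additivity plus homogeneity) of the Shapley value that the distance-weighted sum of the neighbors' Shapley vectors is itself the Shapley vector of the distance-weighted sum of characteristic functions, and the paper's appendix even writes down the same explicit $v_{\text{AVA}}(S) = \sum_{\vec{z}} v_{\vec{z}}(S)/\rho(\vec{x}_{\text{test}},\vec{z})$ that you propose. Your discussion of the $v(\emptyset)=0$ normalization is a nice extra bit of care that the paper does not address.
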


\begin{proof}
We want to show that $\vec{g}_{\text{AVA}}(\vec{f},\vec{x}_\text{test}) = \Phi_{\vec{x}_\text{test}}$ is indeed a vector of Shapley values.
Let $\vec{g}_{\text{SHAP}}(\vec{f},\vec{z}) = \Phi_{\vec{z}}$ be the vector of Shapley value contributions for a point $\vec{z} \in \mathcal{N}_k$. By \cite{shap}, we know $\vec{g}_{\text{SHAP}}(\vec{f},\vec{z})_{i} = \phi_i(v_{\vec{z}})$ is a unique Shapley value for the characteristic function $v_{\vec{z}}$. By linearity of Shapley values \cite{shapley52},
we know that:
\begin{equation}
    \label{ava_1}
    \phi_i(v_{\vec{z}_{1}} + v_{\vec{z}_{2}}) = \phi_i(v_{\vec{z}_{1}}) + \phi_i(v_{\vec{z}_{2}})
\end{equation}
This means that the $\Phi_{\vec{z}_{1}} + \Phi_{\vec{z}_{2}}$ will yield a unique Shapley value contribution vector for the characteristic function $v_{\vec{z}_{1}} + v_{\vec{z}_{2}}$. By linearity (or additivity), we know for any scalar $\alpha$:
\begin{equation}
\label{ava_2}
    \alpha \phi_i(v_{\vec{z}}) = \phi_i(\alpha v_{\vec{z}})
\end{equation}
This means $\alpha \Phi_{\vec{z}}$ will yield a unique Shapley value contribution vector for the characteristic function $\alpha v_{\vec{z}}$. Now define:
\begin{equation*}
    \Phi_{\vec{x}_\text{test}} = \sum_{\vec{z} \in \mathcal{N}_k(\vec{x}_{\text{test}})} \frac{\Phi_{\vec{z}}}{\rho(\vec{x}_{\text{test}},\vec{z})}
\end{equation*}
We can conclude that $\Phi_{\vec{x}_\text{test}}$ is a vector of Shapley values.
\end{proof}
While \cite{sundararajan2017axiomatic} takes a path integral from a fixed reference baseline $\bar{\vec{x}}$ and \cite{shap} only considers attribution along the straight line path between $\bar{\vec{x}}$ and $\vec{x}_{\text{test}}$, AVA takes a weighted average of attributions along paths from training points in $\mathcal{N}_k$ to $\vec{x}_{\text{test}}$. 
AVA can similarly be thought of as a convex combination of explanation functions where the explanation functions are the explanations of the neighbors of $\vec{x}_\text{test}$ and the weights are $\rho(\vec{x}_{\text{test}}, \vec{z})^{-1}$. Though the weights are guaranteed to be non-negative, we normalize the weights to sum to $1$ and edit the AVA formulation to be: $\vec{g}_{\text{AVA}}(\vec{f},\vec{x}_{\text{test}}) = \rho_{\text{tot}}\Phi_{\vec{x}_{\text{test}}}$ where $\rho_{\text{tot}} = \sum_{\vec{z} \in \mathcal{N}_k(\vec{x}_{\text{test}})} \rho(\vec{x}_{\text{test}}, \vec{z})^{-1}$. Notice this formulation is a specific convex combination as described before; therefore, AVA will result in a lower sensitivity than $\vec{g}_{\text{SHAP}}(\vec{f},\vec{x})$ alone.

\subsection{Medical Connection}
\label{med}
Similar to how a model uses input features to reach an output, medical professionals learn how to proactively search for risk predictors in a patient.
Medical professionals not only use patient attributes (e.g., vital signs, personal information) to make a diagnosis but also leverage experiences with past patients; for example, if a doctor treated a rare disease over a decade ago, then that experience can be crucial when attributes alone are uninformative about how to diagnose \cite{goold1999doctor}. 
This is the analogous to ``close'' training points affecting a predictor's output. 
AVA combines the attributions of past training points (past patients) to explain an unseen test point (current patient). When using the MIMIC dataset \cite{mi}, AVA models the aforementioned intuition.

\begin{table*}
\begin{center}
\begin{small}
\begin{sc}
\begin{tabular}{ccccc}
\toprule
Input & Best (DeepLift) &  Convex & Gradient-Descent & Region-Shrinking \\
\midrule
\begin{minipage}{.12\textwidth}
      \includegraphics[width=\linewidth]{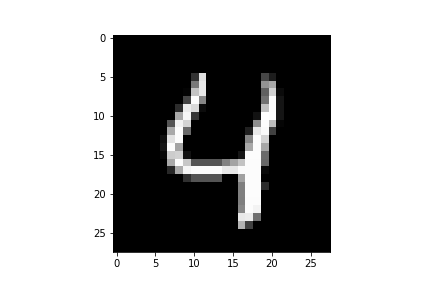}
\end{minipage}
& \begin{minipage}{.12\textwidth}
      \includegraphics[width=\linewidth]{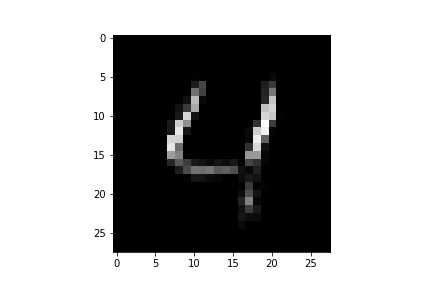}
\end{minipage} 
& \begin{minipage}{.12\textwidth}
      \includegraphics[width=\linewidth]{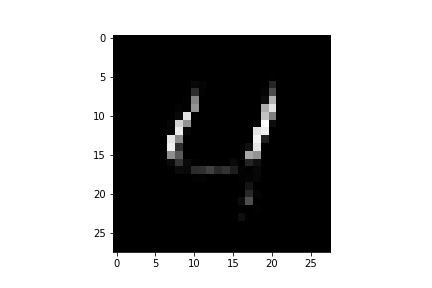}
\end{minipage}  
& \begin{minipage}{.12\textwidth}
      \includegraphics[width=\linewidth]{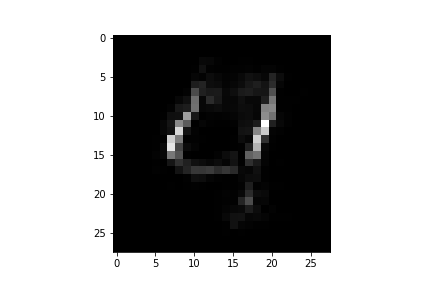}
\end{minipage} 
& \begin{minipage}{.12\textwidth}
      \includegraphics[width=\linewidth]{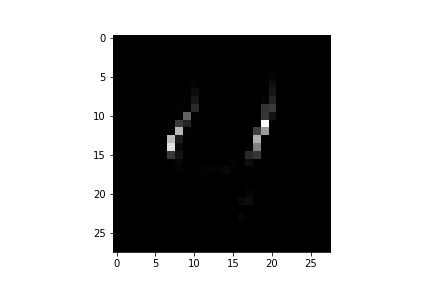}
\end{minipage} \\
  & $\mu_{C} = 3.688$ & $\mu_{C} = 3.685$ & $\mu_{C} = 3.575$ & $\mu_{C} = 3.208$ \\
\bottomrule
\end{tabular}
\end{sc}
\end{small}
\vspace{-0.2cm}
\caption{Qualitative example of aggregation to lower complexity ($\mu_{C}$): We show that it is possible to lower complexity slightly with both of our approaches; note that achieving lowest complexity on an image would imply that all attribution is placed on a single pixel.}
\vspace{-0.1cm}
\label{table:use}
\end{center}
\end{table*}

\section{Experiments}
We now report some empirical results.
We evaluate models trained on the following datasets: Adult, Iris~\cite{Dua:2019}, MIMIC~\cite{mi}, and MNIST~\cite{lecun1998mnist}. We use the following explanation functions: SHAP \cite{shap}, Shapley Sampling (SS)~\cite{vstrumbelj2014explaining}, Gradient Saliency (Grad)~\cite{baehrens2010explain}, Grad*Input (G*I)~\cite{shrikumar2017learning}, Integrated Gradients (IG)~\cite{sundararajan2017axiomatic}, and DeepLift (DL)~\cite{shrikumar2017learning}.

For all tabular datasets, we train a multilayer perceptron (MLP) with leaky-ReLU activation using the ADAM optimizer.  
For Iris \cite{Dua:2019}, we train our model to $96\%$ test accuracy. For Adult \cite{Dua:2019}, our model has $82\%$ test accuracy. As motivated in Section~\ref{med}, we use MIMIC (Medical Information Mart for Intensive Care III) \cite{mi}. 
We extract seventeen real-valued features deemed critical, per \cite{usc}, for sepsis prediction. 
Our model gets $91\%$ test accuracy on the task.
For MNIST \cite{lecun1998mnist}, our model is a convolutional neural network and has $90\%$ test accuracy.

For experiments with a baseline $\bar{\vec{x}}$, zero baseline implies that we set features to $0$ and average baseline uses the average feature value in $\mathcal{D}$. Before doing aggregation, we unit norm all explanations. For the complexity criterion, we take the positive $\ell_1$ norm. We set $D = \ell_2$ and $\rho = \ell_\infty$.

\begin{table}[]
\begin{center}
\begin{small}
\begin{sc}
\begin{tabular}{lcccc}
\toprule
\textbf{Method} & Adult & Iris & MIMIC & MIMIC\\
\textbf{Subset} & 2 & 2 &   10 &  20 \\
\midrule
SHAP & (\textbf{62}, \textbf{60}) & (\textbf{67}, \textbf{68}) &  (31, 36) & (37, 47)\\
SS & (46, 27) & (32, 36) &  (59, \textbf{58}) & (38, 45)\\
Grad  & (30, 53) & (14, 16) & (37, 41) & (28, \textbf{63})\\
G*I & (38, 39) & (27, 30) &  (54, 48) & (59, 43)\\
IG & (47, 33) & (60, 57) &  (66, 51) & (\textbf{68}, 51)\\
DL & (58, 43) & (46, 48) &  (\textbf{84}, 54) & (43, 45)\\
\bottomrule
\end{tabular}
\end{sc}
\end{small}
\end{center}
\caption{Faithfulness $\mu_\text{F}$ averaged over a test set: (Zero Baseline, Training Average Baseline). Exact quantities can be obtained by dividing table entries by $10^{2}$}
\label{tab:faith}
\end{table}

\subsection{Faithfulness $\mu_\text{F}$}
In Table~\ref{tab:faith}, we report results for faithfulness for various explanation functions. When evaluating, we take the average of multiple runs where, in each run, we see at least $50$ datapoints; for each datapoint, we randomly select $|S|$ features and replace them with baseline values. We then calculate the Pearson's correlation coefficient between the predicted logits of each modified test point and the average explanation attribution for only the subset of features. We notice that, as subset size increases, faithfulness increases until the subset is large enough to contain all informative features. We find that Shapley values, approximated with weighted least squares, are the most faithful explanation function for smaller datasets.

\subsection{Max and Avg Sensitivity $\mu_\text{M}$ and $\mu_\text{A}$}
In Table~\ref{tab:sens}, we report the max and average sensitivities for various explanation functions. 
To evaluate the sensitivity criterion, we sample a set of test points from $\mathcal{D}$ and an additional larger set of training points. We then find the training points that fall within a radius $r$ neighborhood of each test point and find the distance between each nearby training point explanation and the test point explanation to get a mean and max. We average over ten random runs of this procedure. Sensitivity is highly dependent on the dimensionality $d$ and on the radius $r$. We find that as sensitivity decreases as $r$ increases. Empirically, for MIMIC, Shapley values approximated by weighted least squares (SHAP) are the least sensitive. 
\begin{table}[]
\label{tab}
\begin{center}
\begin{small}
\begin{sc}
\begin{tabular}{lccc}
\toprule
\textbf{Method} & Adult & Iris & MIMIC \\
\textbf{radius} & 2 & 0.2 &   4 \\
\midrule
SHAP & (60, 54) & (310, 287) & (\textbf{6}, \textbf{5})\\
SS & (191, 168) & (477 , 345) &  (83, 81)\\
Grad & (60, 50) & (\textbf{68}, \textbf{66}) &  (28, 28)\\
G*I & (86, 71) & (298, 279) & (77, 50)\\
IG & (\textbf{19}, \textbf{17}) & (495, 462) & (19, 15)\\
DL & (74, 74) & (850, 820) & (135, 111)\\
\bottomrule
\end{tabular}
\end{sc}
\end{small}
\end{center}
\caption{Sensitivity: (Max $\mu_{\text{M}}$, Avg $\mu_{\text{A}}$). Exact quantities can be obtained by dividing table entries by $10^{3}$}
\label{tab:sens}
\end{table}

\subsection{MNIST Complexity $\mu_\text{C}$}
In Table~\ref{table:use}, we provide a qualitative example for the gradient descent-style and region-shrinking methods for lowering complexity of explanations from a model trained on MNIST. 
We show an example with images since it illustrates the notion of lower complexity well; however, other data types (tabular) might be better suited for complexity optimization. 

\begin{table}[]
\begin{center}
\begin{small}
\begin{sc}
\begin{tabular}{lccc}
\toprule
\textbf{Method} & Adult & Iris &  MIMIC \\
\midrule
$\mu_{A}(\vec{f},\vec{g}_{\text{SHAP}})$ & 0.16  $\pm$  0.11 &  0.22  $\pm$  0.25  &  0.47  $\pm$  0.12  \\
$\mu_{A}(\vec{f},\vec{g}_{\text{AVA}})$ & \textbf{0.07}  $\pm$  \textbf{0.07}  &  \textbf{0.13} $\pm$  \textbf{0.18} &   \textbf{0.31}  $\pm$  \textbf{0.13}  \\
\midrule
$\mu_{M}(\vec{f},\vec{g}_{\text{SHAP}})$ &  0.68  $\pm$  0.13  &  1.20  $\pm$  0.36  &   0.83  $\pm$  0.17  \\
$\mu_{M}(\vec{f},\vec{g}_{\text{AVA}})$ &  \textbf{0.52} $\pm$  \textbf{0.11}  &  \textbf{1.18} $\pm$  \textbf{0.28}  &   \textbf{0.72} $\pm$  \textbf{0.22}\\
\midrule
$\mu_{C}(\vec{f},\vec{g}_{\text{SHAP}})$ & 1.94  $\pm$  0.26  &  1.36  $\pm$  0.36 & \textbf{2.33}  $\pm$  \textbf{0.23}  \\
$\mu_{C}(\vec{f},\vec{g}_{\text{AVA}})$ &  \textbf{1.93} $\pm$  \textbf{0.24}&   \textbf{1.24}  $\pm$  \textbf{0.32}  &   2.61  $\pm$  0.29\\
\bottomrule
\end{tabular}
\end{sc}
\end{small}
\end{center}
\caption{AVA lowers the sensitivity of Shapley value explanations across all datasets. When $d$ is small (fewer features), AVA explanations are slightly less complex.}
\label{table:ava}
\end{table}

\subsection{AVA} 
Our empirical findings support use of an AVA explanation if low sensitivity is desired. 
\cite{ghorbani2017interpretation} note that perturbation-based explanations (like $\vec{g}_{\text{SHAP}}$) are less sensitive than their gradient-based counterparts. 
In Table \ref{table:ava}, we show that AVA explanations not only have lower sensitivities in all experiments but also have less complex explanations (depending on the radius $r$ and number of features $d$). After finding the average distance between pairs of points, we use $r = 1$ for Adult, $r = 0.3$ for Iris, and $r = 10$ for MIMIC. 

\section{Conclusion}
Borrowing from earlier work in social science and the philosophy of science, we codify low sensitivity, high faithfulness, and low complexity as three desirable properties of explanation functions. 
We define these three properties for feature-based explanation functions, develop an aggregation scheme for learning combinations of various explanation functions, and devise schemes to learn explanations with lower complexity (iterative approaches) and lower sensitivity (AVA).
We hope that this work will provide practitioners with a principled way to evaluate feature-based explanations and to learn an explanation which aggregates and optimizes for criteria desired by end users.
Though we consider one criterion at a time, future work could further axiomatize our criteria, explore the interaction between different evaluation criteria, and devise a multi-objective optimization approach to finding a desirable explanation; for example, can we develop a procedure for learning a less sensitive and less complex explanation function simultaneously?

\section*{Acknowledgements}
We thank reviewers for their feedback. We thank Pradeep Ravikumar, John Shi, Brian Davis, Kathleen Ruan, Javier Antoran, James Allingham, and Adithya Raghuraman for their comments and help. 
UB acknowledges support from DeepMind and the Leverhulme Trust via the Leverhulme Center for the Future of Intelligence (CFI) and from the Partnership on AI. AW acknowledges support from the David MacKay Newton Research Fellowship at Darwin College, The Alan Turing Institute under EPSRC grant EP/N510129/1 \& TU/B/000074, and the Leverhulme Trust via the CFI.

\clearpage
\bibliographystyle{named}
\bibliography{ijcai20}

\clearpage
\appendix
\section{Additional Evaluation Criteria}
In addition to the aforementioned three criteria, there are many other desirable criteria for a $\vec{g}$. To assist practitioners, we now collect and list these additional quantitative evaluation criteria for feature-level explanations. It is possible to evaluate all criteria for both perturbation-based explanations \cite{vstrumbelj2014explaining,shap} and gradient-based explanations \cite{sundararajan2017axiomatic,shrikumar2017learning}. Note we omit evaluation criteria that assume access to ground-truth explanations for training points; for a thorough treatment on this topic, see~\cite{hind2019ted,osman2020ground}. We do not delve into human-centered evaluation of explanation functions either; see~\cite{gilpin2018explaining,poursabzi2018manipulating,yang2019evaluating} for detailed discussions.

\subsubsection{Predictability of Explanations}
We would want to ensure that explanations from $\vec{g}$ are predictable. As such, $\vec{g}(\vec{f},\vec{x})$ ought not vary over function calls. \cite{ecf} notes that identical inputs should give the identical explanations.
\begin{defn}[Identity]
Given a predictor $\vec{f}$, an explanation function $\vec{g}$, and distance metrics $D$ and $\rho$, we define the identity criterion for $\vec{g}$ on $\mathcal{D}$ as:
\begin{align*}
\mu_{\text{IDENTITY}}(\vec{f},\vec{g}) & = & \mathbb{E}_{\vec{x} \in \mathcal{D}_x} \big[ D(\vec{g}(\vec{f},\vec{x}),\vec{g}(\vec{f},\vec{x})) \big ]\\
& = & \mathbb{E}_{\vec{x} \sim \mathcal{D}_x} \big[ ||\vec{g}(\vec{f},\vec{x}) - \vec{g}(\vec{f},\vec{x})||_0 \big ]
\end{align*}
\end{defn}
Note the above two are equivalent and we take the $\ell_0$ norm of the difference between two separate calls to $\vec{g}$ with the same input $\vec{x}$. The identity criterion favors non-stochastic explanation functions. We would want to ensure that any non-identical inputs should have non-identical explanations.
\begin{defn}[Separability]
Given a predictor $\vec{f}$, an explanation function $\vec{g}$, and distance metrics $D$ and $\rho$, we define the separability of $\vec{g}$ on $\mathcal{D}$ as:
\begin{align*}
\mu_{\text{SEP}}(\vec{f},\vec{g}) & = &\mathbb{E}_{\vec{x},\vec{z} \in \mathcal{D}_x, \vec{x} \neq \vec{z}} \big[ D(\vec{g}(\vec{f}, \vec{x}),\vec{g}(\vec{f},\vec{z})) \big ] \\
& = & \mathbb{E}_{\vec{x},\vec{z} \in \mathcal{D}_x, \vec{x} \neq \vec{z}} \big[ ||\vec{g}(\vec{f},\vec{x}) - \vec{g}(\vec{f},\vec{z})||_0 \big ] 
\end{align*}

\end{defn}
We would also want to know how surprising an explanation $\vec{g}(\vec{f},\vec{x})$ is compared to explanations for training data. \cite{hazard2019natively} defines conviction of an input $\vec{x}$ with respect to $\mathcal{D}_{\vec{x}}$ for $k$-Nearest Neighbor algorithms; similarly, we define the conviction of $\vec{g}(\vec{f},\vec{x})$ to explanations of training points, $\mathcal{D}_{\vec{x}}$, using $\vec{g}$.

\begin{defn}[Conviction]
Given a predictor $\vec{f}$, an explanation function $\vec{g}$, a probability distribution over explanations $\mathbb{P}_{\vec{\phi}}(\cdot)$, and a data point $\vec{x}$, we define the conviction of $\vec{g}$ at $\vec{x}$ for $\mathcal{D}$ as:
\[
\mu_{\text{CON}}(\vec{f},\vec{g}, \mathbb{P}_{\vec{\phi}}; \vec{x})  = \frac{\mathbbm{E}_{\vec{z} \sim \mathcal{D}_{\vec{x}}}[I(\vec{g}(\vec{f},\vec{z}) )]}{I(\vec{g}(\vec{f},\vec{x}))}
\]
\indent where $I(\vec{g}(\vec{f},\vec{x})) = - ln(\mathbb{P}_{\vec{\phi}}(\vec{g}(\vec{f},\vec{x})))$
\end{defn}
$\mu_{\text{CON}} = 0$ means that $\vec{g}(\vec{f},\vec{x})$ is surprising. As $\mu_{\text{CON}} \rightarrow \infty$, $\vec{g}(\vec{x})$ contains an expected amount of surprisal and can reasonably occur. We desire a higher $\mu_{\text{CON}}$, implying that $\vec{g}$ behaves predictably.  By changing the distribution to $\mathbb{P}_{\vec{\phi}}(\cdot | y = \vec{f}(\vec{x}))$, the numerator to conditional entropy where $\vec{f}(\vec{z}) = \vec{f}(\vec{x})$, and self-information to $I(\vec{g}(\vec{f},\vec{x})) = -ln(\mathbb{P}_{\vec{\phi}}(\vec{g}(\vec{f},\vec{x})| y = \vec{f}(\vec{x})))$, we define the \textit{conditional conviction} of $\vec{g}(\vec{f}, \vec{x})$ to explanations of the same predicted class.

Other techniques have also argued that $\vec{g}(\vec{f}, \vec{x})$ should recover the output of the original predictor, $\vec{f}(\vec{x})$. Deemed compatibility, this criterion attempts to use $\vec{g}$ as a simple proxy for reproducing the outputs of the complex $\vec{f}$.
\begin{defn}[Compatibility]
Given a predictor $\vec{f}$ and an explanation function $\vec{g}$, we define the completeness of $\vec{g}$ for a dataset $\mathcal{D}$ as:
\[
\mu_{\text{COM}}(\vec{f},\vec{g}) = \frac{1}{N} \sum_{\vec{x} \in \mathcal{D}_x} \left| \left(\sum_{i=1}^{d}\vec{g}(\vec{f}, \vec{x})_{i}\right) - \vec{f}(\vec{x}) \right|
\]
\end{defn}
The closer $\mu_{\text{COM}}$ is to $0$, the more compatible the explanation function is; that is, the explanation function recovers the complex model's outputs well. This criterion is related to the the \textit{completeness} axiom of some explanation functions~\cite{sundararajan2017axiomatic}. An explanation functions can be built to be compatible with the original model (or complete with respect to $\vec{f}$). This is also related to the notion of \textit{post-hoc accuracy} discussed in~\cite{chen2018}.

\subsubsection{Importance of Explanations}
Not only do we want to ensure that the $\vec{g}$ faithfully identifies the most important features, but we also want to understand how well $\vec{f}$ performs when $\vec{x}_{s}$ is unobserved (or set to a baseline $\vec{x}_{s} = \bar{\vec{x}}_{s}$). In particular, we craft $S$ to contain the indices for the $|S|$ features with the highest $\text{abs}(\vec{g}(\vec{f},\vec{x})_i)$.
\[S = \argmax_{S \subset [d], |S| = k} \space \sum_{i \in S} \text{abs}(\vec{g}(\vec{f},\vec{x})_i)\]
Therefore, $\vec{x}_{s}$ is now a sub-vector of the most important features according to a specific $\vec{g}$. As done in \cite{chang2018explaining}, we define a score $s_{\vec{f}}$ for how confidently $\vec{f}$ predicts an output $y$ in terms of log-odds. 
\[
s_{\vec{f}}(y|\vec{x}) = \text{log}(\mathbb{\hat P}_{\vec{f}}(y|\vec{x})) - \text{log}(1 - \mathbb{\hat P}_{\vec{f}}(y|\vec{x}))
\]

\begin{defn}[Deletion]
Given a predictor $\vec{f}$, an explanation function $\vec{g}$, a point of interest $\vec{x}$, a predicted output $y$, and a subset of important features $S$, we define the deletion score for $\vec{f}$ at $\vec{x}$ as:
\[
\mu_{\text{DEL}}(\vec{f},\vec{g};\vec{x},y) = s_{\vec{f}}(y|\vec{x}) - s_{\vec{f}}(y|\vec{x}_{[\vec{x}_{s} = \bar{\vec{x}}_{s}]})
\]
\end{defn}

\begin{defn}[Addition]
Given a predictor $\vec{f}$, an explanation function $\vec{g}$, a point of interest $\vec{x}$, a predicted output $y$, and a subset of important features $S$, we define the addition score for $\vec{f}$ at $\vec{x}$ as:
\[
\mu_{\text{ADD}}(\vec{f},\vec{g};\vec{x},y) =  s_{\vec{f}}(y|\vec{x}_{[\vec{x}_{s} = \bar{\vec{x}}_{s}]}) - s_{\vec{f}}(y|\bar{\vec{x}})
\]
\end{defn}

While the deletion score conveys how the log-odds change when we delete the subset of important features from $\vec{x}$, the addition score tells us how much the log-odds change when we add the subset to the baseline. Instead of re-scoring (via change in log-odds) a modified input like $\vec{x}_{[\vec{x}_{s} = \bar{\vec{x}}_{s}]}$, we can retrain the predictor $\vec{f}$ based on a dataset of modified inputs $\mathcal{D}_{\vec{x}_{[\vec{x}_{s} = \bar{\vec{x}}_{s}]}}$. Addition and Deletion are closely related to \textit{explanation selectivity}, described in~\cite{montavon2018methods}.

Let $\vec{f}_{\bar{\vec{x}}_{s}}$ denote the predictor trained on the modified inputs with the most important pixels removed. As in \cite{2018explainabilityeval}, we define the ROAR score as the difference in accuracy between the original predictor and the modified predictor. We can also train a predictor where the least important features (those in $\vec{x}_c$) are removed. We denote that predictor to be $\vec{f}_{\bar{\vec{x}}_{c}}$ and define a KAR score, as proposed in \cite{2018explainabilityeval}.
\begin{defn}[ROAR]
Given a predictor $\vec{f}$, an explanation function $\vec{g}$, a modified predictor $\vec{f}_{\bar{\vec{x}}_{s}}$, and a subset of important features $S$, we define the ROAR score for $\vec{g}$ on a dataset $\mathcal{D}$ as:
\[
\mu_{\text{ROAR}}(\vec{f},\vec{g}, \vec{f}_{\bar{\vec{x}}_{s}}) = \frac{1}{N} \sum_{\vec{x} \in \mathcal{D}_x} \mathbbm{1}[\vec{f}(\vec{x}) = y] - \mathbbm{1}[\vec{f}_{\bar{\vec{x}}_{s}}(\vec{x}) = y]
\]
\end{defn}

\begin{defn}[KAR]
Given a predictor $\vec{f}$, an explanation function $\vec{g}$, a modified predictor $\vec{f}_{\bar{\vec{x}}_{c}}$, and a subset of important features $S$, we define the KAR score for $\vec{g}$ on a dataset $\mathcal{D}$ as:
\[
\mu_{\text{KAR}}(\vec{f},\vec{g}, \vec{f}_{\bar{\vec{x}}_{c}}) = \frac{1}{N} \sum_{\vec{x} \in \mathcal{D}_x} \mathbbm{1}[\vec{f}(\vec{x}) = y] - \mathbbm{1}[\vec{f}_{\bar{\vec{x}}_{c}}(\vec{x}) = y]
\]
\end{defn}

\subsubsection{Other Connections}
We can also draw parallels between the three criteria proposed in the main paper and existing criteria in the literature.

Low sensitivity is discussed  as \textit{stability} in \cite{melis2018towards}, as \textit{explanation continuity} in \cite{montavon2018methods}, as \textit{sensitivity} in \cite{yeh2019sensitive}, and as \textit{reliability} in \cite{kindermans2019reliability}.

High faithfulness appears as \textit{relevance} in \cite{samek2016evaluating}, as \textit{gold set} in \cite{ribeiro2016should}, as \textit{faithfulness} in \cite{plumb2018model}, as \textit{sensitivity-n} in \cite{ancona2018towards}, and as \textit{infidelity} in~\cite{yeh2019sensitive}.

Low complexity is loosely related to \textit{information gain} from \cite{bylinskii2018different} and to \textit{descriptive sparsity} from \cite{warnecke2019evaluating}.

Moreover, very recent literature has also tried to develop various other explanation function criteria: parameter randomization~\cite{adebayo2018sanity}, clustering-based interpretations~\cite{carter2019made}, existence of ``unexplainable components''~\cite{zhang2019unified}, variants of perturbation techniques~\cite{grabskabarwiska2020measuring}, variants of mutual information measures~\cite{nif}, impact of iterative feature removal~\cite{rieger2020irof},  
%hyperparameter~sensitivity~\cite{bansal2020sam}, 
and necessity and sufficiency of attributions~\cite{wang2020interpreting}.

\newpage
\section{Proofs}
For thoroughness, we elaborate on the proofs from the main paper here.
\subsection{Proof of Proposition 1}
\begin{proof}
Assuming we fix the predictor $\vec{f}$, let $\vec{g}(\vec{x}) = \vec{g}(\vec{f},\vec{x})$ and let $\int$ represent $\underset{\rho(\vec{x},\vec{z}) \leq R}{\int}$ for the rest of this proof.
\begin{align*}
& \mu_{\text{A}}(\vec{g}_{agg})= \int D(\vec{g}_{agg}(\vec{x}),\vec{g}_{agg}(\vec{z}))\mathbb{P}_{\vec{x}}(\vec{z})d\vec{z}\\
& = \int||\vec{g}_{agg}(\vec{x}) - \vec{g}_{agg}(\vec{z})||_2 d\vec{z}\\
& = \int||w\vec{g}_{1}(\vec{x}) + (1-w)\vec{g}_{2}(\vec{x}) - w\vec{g}_{1}(\vec{z}) - (1-w)\vec{g}_{2}(\vec{z})||_2 d\vec{z}\\
& = \int ||w\vec{g}_{1}(\vec{x}) - w\vec{g}_{1}(\vec{z}) + (1-w)\vec{g}_{2}(\vec{x}) - (1-w)\vec{g}_{2}(\vec{z})||_2 d\vec{z}\\
& = \int||w(\vec{g}_{1}(\vec{x}) - \vec{g}_{1}(\vec{z})) + (1-w)(\vec{g}_{2}(\vec{x}) - \vec{g}_{2}(\vec{z}))||_2 d\vec{z}\\
& \leq \int||w(\vec{g}_{1}(\vec{x}) - \vec{g}_{1}(\vec{z}))||_2 + ||(1-w)(\vec{g}_{2}(\vec{x}) - \vec{g}_{2}(\vec{z}))||_2 d\vec{z}\\
& \leq \int w||\vec{g}_{1}(\vec{x}) - \vec{g}_{1}(\vec{z})||_2 + (1-w)||\vec{g}_{2}(\vec{x}) - \vec{g}_{2}(\vec{z})||_2 d\vec{z}\\
& \leq \int w D(\vec{g}_{1}(\vec{x}),\vec{g}_{1}(\vec{z})) + (1-w)D(\vec{g}_{2}(\vec{x}), \vec{g}_{2}(\vec{z})) d\vec{z}\\
& \leq w\int D(\vec{g}_{1}(\vec{x}),\vec{g}_{1}(\vec{z}))d\vec{z} + (1-w)\int D(\vec{g}_{2}(\vec{x}), \vec{g}_{2}(\vec{z})) d\vec{z}\\
& \leq w \mu_{\text{A}}(\vec{g}_{1}) + (1-w) \mu_{\text{A}}(\vec{g}_{2})
\end{align*}
\end{proof}

\subsection{Proof of Proposition 2}
\begin{proof}
To prove this, we just need to show that the sum of the squared distances is minimized by the mean of a set of explanation vectors:
\[
\vec{g}_{agg}(\vec{f},\vec{x}) = \frac{1}{m}\sum_{i=1}^m\vec{g}_{i}(\vec{f},\vec{x})
\]
Recall we have a set of candidate explanation functions $\mathcal{G}_m = \{\vec{g}_1,\ldots, \vec{g}_m\}$. Fix a point of interest $\vec{x}$. Since $d$ is the $\ell_2$ distance and $p = 2$, we define a loss function as follows:
\[
L(\vec{g}_{\text{agg}}(\vec{f},\vec{x})) = \sum_{i=1}^{m} \, ||\vec{g}_{\text{agg}}(\vec{f},\vec{x})-\vec{g}_{i}(\vec{f},\vec{x})||_2^2
\]
We then compute the partial derivatives with respect to each feature of our aggregate explanation $\vec{g}_{\text{agg}}(\vec{f},\vec{x})_j$.
\begin{gather*}
\frac{\partial L}{\partial \vec{g}_{\text{agg}}(\vec{f},\vec{x})_j} = 2m\vec{g}_{\text{agg}}(\vec{f},\vec{x})_j - 2\sum_{i=1}^{m}\vec{g}_{i}(\vec{f},\vec{x})_j \\
\vec{g}_{\text{agg}}(\vec{f},\vec{x})_j = \frac{\sum_{i=1}^{m}\vec{g}_{i}(\vec{f},\vec{x})_j}{m} \\ 
\vec{g}_{\text{agg}}(\vec{f},\vec{x}) = \begin{bmatrix}
\frac{\sum_{i=1}^{m}\vec{g}_{i}(\vec{f},\vec{x})_1}{m}\\
\vdots\\
\frac{\sum_{i=1}^{m}\vec{g}_{i}(\vec{f},\vec{x})_d}{m}
\end{bmatrix} = \frac{1}{m}\sum_{i=1}^m\vec{g}_{i}(\vec{f},\vec{x})
\end{gather*}
\end{proof}

\subsection{Proof of Proposition 3}
\begin{proof}
To prove this, we just need to show that the sum of the absolute distances is minimized by the median of a set of explanation vectors:
\[
\vec{g}_{agg}(\vec{f},\vec{x}) = \text{med}\{\vec{g}_{i}(\vec{f},\vec{x})\}
\]
Recall we have a set of candidate explanation functions $\mathcal{G}_m = \{\vec{g}_1,\ldots, \vec{g}_m\}$. Fix a point of interest $\vec{x}$. Since $d$ is the $\ell_1$ distance and $p = 1$, we define a loss function as follows:
\[
L(\vec{g}_{\text{agg}}(\vec{f},\vec{x})) = \sum_{i=1}^{m} \, |\vec{g}_{\text{agg}}(\vec{f},\vec{x})-\vec{g}_{i}(\vec{f},\vec{x})|
\]
Taking the partial derivative of the above with respect to each feature of our aggregate explanation $\vec{g}_{\text{agg}}(\vec{x})_j$ yields.
\[
\frac{\partial L}{\partial \vec{g}_{\text{agg}}(\vec{f},\vec{x})_j} = \sum_{i=1}^{m} \text{sign}(\vec{g}_{\text{agg}}(\vec{f},\vec{x})_j -  \vec{g}_{i}(\vec{f},\vec{x})_j) 
\]
Now the above partial derivative only equals zero when the number of positive and negative items are the same. The median is the only value where the number of positive items (those greater than the median) and the number of negative items (those less than the median) are equal. Thus, the median value for each feature $j$ would minimize the sum of absolute deviations loss we crafted above (i.e. $\vec{g}_{\text{agg}}(\vec{f},\vec{x})_j = \text{med}\{\vec{g}_{1}(\vec{f},\vec{x})_j, \vec{g}_{2}(\vec{f},\vec{x})_j, \ldots, \vec{g}_{m}(\vec{f},\vec{x})_j\}$.
\end{proof}

\subsection{Alternative Proof of Theorem 5}
\begin{proof}
We want to show that $\vec{g}_{\text{AVA}}(\vec{f},\vec{x}_\text{test}) = \Phi_{\vec{x}_\text{test}}$ is indeed a vector of Shapley values.
Let $\vec{g}_{\text{SHAP}}(\vec{f},\vec{z}) = \Phi_{\vec{z}}$ be the vector of Shapley value contributions for a point $\vec{z} \in \mathcal{N}_k$. By \cite{shap}, we know that $\vec{g}_{\text{SHAP}}(\vec{f},\vec{z})_{i} = \phi_i(v_{\vec{z}})$ is a unique Shapley value for the characteristic function $v_{\vec{z}}$. By linearity of Shapley values \cite{shapley52},
we know that:
\begin{equation}
    \label{ava_11}
    \phi_i(v_{\vec{z}_{1}} + v_{\vec{z}_{2}}) = \phi_i(v_{\vec{z}_{1}}) + \phi_i(v_{\vec{z}_{2}})
\end{equation}
This means that the $\Phi_{\vec{z}_{1}} + \Phi_{\vec{z}_{2}}$ will yield a unique Shapley value contribution vector for the characteristic function $v_{\vec{z}_{1}} + v_{\vec{z}_{2}}$. By linearity (also called additivity), we also know that, for any scalar $\alpha$:
\begin{equation}
\label{ava_22}
    \alpha \phi_i(v_{\vec{z}}) = \phi_i(\alpha v_{\vec{z}})
\end{equation}
This means that the $\alpha \Phi_{\vec{z}}$ will yield a unique Shapley value contribution vector for the characteristic function $\alpha v_{\vec{z}}$. 
Now, to show $\Phi_{\vec{x}_\text{test}}$ is a vector of Shapley values, it suffices to show that any $\phi_{i}(v_{\text{AVA}}) \in \Phi_{\vec{x}_\text{test}}$ is a Shapley value.
As such, we define $v_{\text{AVA}}$ to be the characteristic function of $\vec{g}_{\text{AVA}}(\vec{f},\vec{x})$, where we find the average weighted importance score of the neighbors of $\vec{x}_\text{test}$.
\begin{align}
\label{ava_3}
v_{\text{AVA}}(S) & =  \sum_{\vec{z} \in \mathcal{N}_k(\vec{x}_{\text{test}})} \frac{v_{\vec{z}}(S)}{\rho(\vec{x}_{\text{test}},\vec{z})}\\
& =  \sum_{\vec{z} \in \mathcal{N}_k(\vec{x}_{\text{test}})} \frac{1}{\rho(\vec{x}_{\text{test}},\vec{z})} \mathbb{E}_{Y}\bigg [ - \text{log}\frac{1}{\mathbb{P}_{\vec{f}}(Y|\vec{z}_{s})} \bigg | \vec{z} \bigg] \nonumber
\end{align}
By Equations \ref{ava_11}, \ref{ava_22}, and \ref{ava_3},  we can see that $\phi_{i}(v_{\text{AVA}})$ is a Shapley value.
\begin{align}
\vec{g}_{\text{AVA}}(\vec{f},\vec{x}_{\text{test}})_{i} & = \phi_{i}(v_{\text{AVA}})\\ & = \sum_{\vec{z} \in \mathcal{N}_k(\vec{x}_{\text{test}})} \frac{\vec{g}_{\text{SHAP}}(\vec{f},\vec{z})_{i}}{\rho(\vec{x}_{\text{test}}, \vec{z})}\nonumber\\ & = \sum_{\vec{z} \in \mathcal{N}_k(\vec{x}_{\text{test}})} \frac{\phi_i(v_{\vec{z}})}{\rho(\vec{x}_{\text{test}}, \vec{z})} \nonumber
\end{align}

\end{proof}

\section{Details on Lowering Complexity}
Given a fixed input $\vec{x}$ and an explanation function $\vec{g}_i$, the complexity can be rewritten as: 
\begin{align*}
\mu_{\text{C}}(\vec{f},\vec{g}_i; \vec{x}) & = - \sum_{k=1}^{d}\frac{|\vec{g}_{i}(\vec{f},\vec{x})_k|}{\underset{j \in [d]}{\sum} |\vec{g}_{i}(\vec{f},\vec{x})_j|}\ln\left( \frac{|\vec{g}_{i}(\vec{f},\vec{x})_k|}{\underset{j \in [d]}{\sum} |\vec{g}_{i}(\vec{f},\vec{x})_j|}\right)
\end{align*}
This will help us determine how a small perturbation of the $k$th component of $\vec{g}_i(\vec{f},\vec{x})$ will affect the complexity of $\vec{g}_i$, which, in turn, will help find a lower complexity explanation.
Note $\vec{g}_{i}(\vec{f},\vec{x})_k$ is the $k$th component of $\vec{g}_i(\vec{f},\vec{x})$. The partial derivative of $\mu_{\text{C}}(\vec{f},\vec{g}_i; \vec{x})$ with respect to the $k$th component of $\vec{g}_{i}(\vec{f},\vec{x})$ is:
\begin{align*}
    \frac{\partial{\mu_{\text{C}}(\vec{f},\vec{g}_i; \vec{x})}}{\partial{\vec{g}_{i}(\vec{f},\vec{x})_k}} = - 
    & 
    \left(1+\ln\left(a\right)\right)\frac{\sum_{\substack{l = 1 \nonumber \\ l\neq k}}^{d} |\vec{g}_{i}(\vec{f},\vec{x})_l|}{(\underset{j \in [d]}{\sum} |\vec{g}_{i}(\vec{f},\vec{x})_j|)^2 }\\ & +
    \sum_{\substack{l = 1 \\ l\neq k}}^{d}  \left(1+\ln\left(b\right)\right)\frac{ |\vec{g}_{i}(\vec{f},\vec{x})_l|}{(\underset{j \in [d]}{\sum} |\vec{g}_{i}(\vec{f},\vec{x})_j|)^2 }
    \label{eqn:1}
\end{align*}
where $a = \frac{|\vec{g}_{i}(\vec{f},\vec{x})_k|}{\underset{j \in [d]}{\sum} |\vec{g}_{i}(\vec{f},\vec{x})_j|}$ and $b = \frac{|\vec{g}_{i}(\vec{f},\vec{x})_l|}{\underset{j \in [d]}{\sum} |\vec{g}_{i}(\vec{f},\vec{x})_j|}$. 

We now provide an additional discussion and comparison of the two algorithms for lowering complexity.

\begin{algorithm}

\caption{Gradient-Descent Style Approach to finding $\vec{g}_{\text{agg}}(\vec{f},\vec{x})$ with lower complexity}
\label{alg1}
\begin{algorithmic} 
\REQUIRE $\alpha$, $\vec{g}_{i}(\vec{f},\vec{x}), i = 1,\hdots,m$, fixed $\vec{x}$

\STATE{$\triangleright$ Calculate the complexity of each $\vec{g}_{i}(\vec{f},\vec{x})$}

\FOR {$i = 1,\hdots,m$}
\STATE $E_{\textbf{g}_i(\textbf{x})} \leftarrow \mu_{\text{C}}(\vec{f},\vec{g}_i; \vec{x}) \leftarrow - \sum_{k=1}^{d}\frac{|\vec{g}_{i}(\vec{f},\vec{x})_k|}{\underset{j \in [d]}{\sum} |\vec{g}_{i}(\vec{f},\vec{x})_j|}\ln\left( \frac{|\vec{g}_{i}(\vec{f},\vec{x})_k|}{\underset{j \in [d]}{\sum} |\vec{g}_{i}(\vec{f},\vec{x})_j|}\right)$
\ENDFOR

\STATE $\vec{g}_{\text{avg}}(\vec{f},\vec{x}) \leftarrow \frac{1}{m}\sum_{i=1}^m\vec{g}_{i}(\vec{f},\vec{x})$

\FOR {$i = 1,\hdots,m$}

\STATE {$\triangleright$ Move in the direction of $\vec{g}_{\text{avg}}(\vec{f},\vec{x})$ from $\vec{g}_{i}(\vec{f},\vec{x})$ as long as the complexity decreases}

\STATE {$\vec{t}_i \leftarrow \vec{g}_i(\vec{f},\vec{x})$}
\WHILE {Complexity of $\vec{t}_i$ is  decreasing and $\vec{t}_i \neq \vec{g}_{\text{avg}}(\vec{f},\vec{x})$ }
\FOR{$j = 1, \hdots,d$}
\STATE {Calculate $\frac{\partial{E_{\vec{t}_i}}}{\partial{\vec{t}_{ij}}}$}

\IF{Complexity decreases by moving in the $j$ direction towards $\vec{g}_{\text{avg}}(\vec{f},\vec{x})$}
\STATE {$\triangleright$ Update $\vec{t}_{ij}$}

\STATE{$\vec{t}_{ij} \leftarrow \vec{t}_{ij} + \alpha\frac{\partial{E_{\vec{t}_{i}}}}{\partial{\vec{t}_{ij}}}$}
\ENDIF
\ENDFOR
\ENDWHILE

\STATE {$\triangleright$ Move in the direction of $\vec{g}_{i}(\vec{f},\vec{x})$ from $\vec{g}_{\text{avg}}(\vec{f},\vec{x})$ as long as the complexity decreases}

\STATE {$\vec{q}_i \leftarrow \vec{g}_{\text{avg}}(\vec{f},\vec{x})$}
\WHILE {Complexity of $\vec{q}_i$ is  decreasing and $\vec{q}_i \neq \vec{g}_{i}(\vec{x})$ }
\FOR{$j = 1, \hdots,d$}
\STATE {Calculate $    \frac{\partial{E_{\vec{q}_i}}}{\partial{\vec{q}_{ij}}}$}

\IF{Complexity decreases by moving in the $j$ direction towards $\vec{g}_{i}(\vec{x})$}
\STATE {$\triangleright$ Update $\vec{q}_{ij}$}

\STATE{$\vec{q}_{ij} \leftarrow \vec{q}_{ij} + \alpha \frac{\partial{E_{\vec{q}_{i}}}}{\partial{\vec{q}_{ij}}}$}

\ENDIF
\ENDFOR
\ENDWHILE
\STATE{$\triangleright$ Take the $\vec{t}_i,\vec{q}_{i}$ that minimizes the complexity}
\STATE {$\vec{b}_{i} = \underset{\vec{x} = \{\vec{q}_{i},\vec{t}_{i}\}}{\text{min}} \ E_{\vec{x}}$}
\ENDFOR

\STATE{$\triangleright$ Take the $\vec{b}_i$ that minimizes the complexity}
\STATE{$\vec{g}_{\text{agg}}(\vec{f},\vec{x})  =  \underset{\vec{b}_{i}}{\text{min}} \ E_{\vec{b}_i}$}
\end{algorithmic}
\end{algorithm}

\begin{algorithm}
\caption{Region Shrinking Approach to finding $\vec{g}_{\text{agg}}(\vec{f},\vec{x})$ with lower complexity}
\label{alg2}
\begin{algorithmic} 
\REQUIRE {$\vec{g}_{i}(\vec{f},\vec{x}), i = 1,\hdots,m$}, fixed $\vec{x}$
\STATE{$t\leftarrow 0$}
\STATE{$\triangleright$ Add all the $\vec{g}_i$ into set $S$}
\STATE{$S \leftarrow \{\vec{g}_{i}(\vec{f},\vec{x}), i = 1,\hdots,m\}$}
\REPEAT 
\STATE{$\triangleright$ Repeat K times}
\STATE{$\triangleright$ Initialize $S'$}
\STATE{$S' \leftarrow \emptyset$}
\FOR{every 2 points in S: $P_1, P_2$}
\STATE{Find point $P$ with the minimum entropy in the convex combination of $P_1, P_2$}
\STATE{Add point $P$ to $S'$}
\ENDFOR
\STATE{$\triangleright$ Update values}
\STATE{Choose the $N$ minimum entropy points in $S'$ to form S}
\STATE{$t\leftarrow t + 1$}
\UNTIL{$t = K$}
\STATE{$\triangleright$ Take the element in set $S$ that minimizes the entropy}
\STATE{$\vec{g}_{\text{agg}}(\vec{f},\vec{x})  =  \underset{k \in S}{\text{min}} \ E_{k}$}
\end{algorithmic}
\end{algorithm}

We  presented two algorithms for finding a $\vec{g}_{\text{agg}}$ with lower complexity: a gradient descent approach (Algorithm \ref{alg1}) and a region shrinking approach (Algorithm \ref{alg2}). 
Algorithm \ref{alg1} relies on a greedy choice of selecting one of the $j$ directions to move in. This algorithm works best for regions that are smooth and with decreasing complexity around $\vec{g}_{i}$ and $\vec{g}_{\text{avg}}$. Since Algorithm \ref{alg1} does not backtrack and moves component-wise, it can avoid areas of higher complexity, but can take a sub-optimal step. For example, consider when $d = 2$. During a walk, Algorithm \ref{alg1} may start at $\vec{g}_{i}$, move in the $x$ direction, but then get stuck as complexity in $y$ direction increases. However, had we moved in the $y$ direction first and then in the $x$ direction, then we may have found a minimum. The choice of component plagues this approach. 
On the other hand, Algorithm \ref{alg2} solves the issue of getting stuck because of regions of high complexity present in Algorithm \ref{alg1}. Since Algorithm \ref{alg2} shrinks the region by choosing points in the convex combination, it a can avoid the areas of high complexity. Since Algorithm \ref{alg2} uses the line segments between the points chosen, it may be difficult to obtain the global minima, which may not occur on the line segment. 
A combination of the two approaches can be used. First, Algorithm \ref{alg2} can be used to shrink the region being considered into a set, $S$, of points with low complexity. This can avoid getting stuck in areas of high complexity, like in Algorithm \ref{alg1}. Then, Algorithm \ref{alg1} can be used to move around the points in set $S$ in order to find the global minima that may not occur on the line segments considered in Algorithm \ref{alg2}. It can refine the points in set $S$ to obtain a lower complexity. In sum, we can shrink the region considered into several candidate sets and then refine the points in each set by perturbing and performing greedy walks around them to find $\vec{g}_{\text{agg}}$ with a low complexity.

\section{Experimental Setup}
We provide additional details on the datasets used and their respective models from our experiments.
\begin{itemize}
    \item Iris \cite{Dua:2019}: The iris dataset consists of 150 datapoints: 50 per class and 4 features per datapoint. We use a one layer multilayer perceptron trained to $96\%$ accuracy as our $\vec{f}$.
    \item Adult \cite{Dua:2019}: Each of the 48842 datapoints has 38 features and falls in one of two classes. Note we label encode categorical attributes. We use a one layer MLP (40 hidden nodes with leaky-relu activation) trained to an accuracy of $82\%$.
    \item Mimic-III \cite{mi}: The MIMIC-III (Medical Information Mart for Intensive Care III) is a large electronic health record dataset compromised of health related data of over 40,000 patients who were admitted to the the critical care units of Beth Israel Deaconess Medical Center between the years 2001 and 2012. MIMIC-III consists of demographics, vital sign measurements, lab test results, medications, procedures, caregiver notes, imaging reports, and mortality of the ICU patients. Using MIMIC-III dataset, we extracted seventeen real-valued features deemed critical in the sepsis diagnosis task as per \cite{usc}. These are the processed features we extracted for every sepsis diagnosis (a binary variable indicating the presence of sepsis): Glasgow Coma Scale, Systolic Blood Pressure, Heart Rate, Body Temperature, Pao2 / Fio2 ratio, Urine Output, Serum Urea Nitrogen Level, White Blood Cells Count, Serum Bicarbonate Level, Sodium Level, Potassium Level, Bilirubin Level, Age, Acquired Immunodeficiency Syndrome, Hematologic Malignancy, Metastatic Cancer, Admission Type. We used two layers of 16 hidden nodes each and leaky-relu activation to get an accuracy of $91\%$ on the sepsis prediction task.
    \item MNIST with CNN \cite{lecun1998mnist}: We use a CNN trained to $90\%$ accuracy with the following architecture: one layer with $32$ $5 \times 5$ filters and ReLU activation; max pooling layer with a $2 \times 2$ filter and stride of $2$; convolutional layer with $64$ $5 \times 5$ filters and ReLU activation; max pooling layer with a $2 \times 2$ filter and stride of 2; final dense layer with 10 output neurons.  We used the MNIST dataset with 60,000 28x28 grayscale images of the 10 digits, along with a test set of 10,000 images.
\end{itemize}
Note that we fix a dataset-model pairing for all experiments.
% \section{Further Intuition}
% We provide further intuition on the three criteria discussed in the main paper.
% \subsection{Low Sensitivity}
In practice, when calculating \textit{average sensitivity}, we use the following formulation:
\[
\mu_{\text{A}}(\vec{f},\vec{g}, \vec{x}) = \frac{1}{|\mathcal{N}_r|}\sum_{\vec{z} \in \mathcal{N}_{r}} \frac{D(\vec{g}(\vec{f}, \vec{x}),\vec{g}(\vec{f}, \vec{z}))}{\rho(\vec{x}, \vec{z})}
\]
Effectively, we want to ensure that the distance between an explanation of $\vec{x}$ and an explanation of $\vec{z}$, a point in the neighborhood of $\vec{x}$, is proportional to the distance between $\vec{x}$ and $\vec{z}$. Some recent work has shown that \textit{average sensitivity} can be lowered with simple smoothing tricks to explanation functions or with adversarial training of the predictor itself.

\end{document}